\relax
\documentclass[letterpaper]{article} 
\usepackage{aaai20}  
\usepackage{times}  
\usepackage{helvet} 
\usepackage{courier}  
\usepackage[hyphens]{url}  
\usepackage{graphicx} 
\urlstyle{rm} 
\usepackage{graphicx}  
\frenchspacing  
\setlength{\pdfpagewidth}{8.5in}  
\setlength{\pdfpageheight}{11in}  

\usepackage{changes}

\usepackage{algorithm}
\usepackage{algorithmicx}
\usepackage[noend]{algpseudocode}
\usepackage{amsmath,mathtools}
\usepackage{amssymb}
\usepackage{amsthm}

\newtheorem{theorem}{Theorem}
\usepackage{subcaption}
\captionsetup[subfigure]{labelformat=simple}
\algrenewcommand\algorithmicindent{1.0em}%

\fboxsep=0mm
\fboxrule=1pt

 \pdfinfo{
/Title (Multi-Resolution A*)
/Author (Wei Du, Fahad Islam and Maxim Likhachev)
} 

\setcounter{secnumdepth}{2} 
\title{Multi-Resolution A*}
\author{Wei Du, Fahad Islam, {\normalfont and} Maxim Likhachev \\
The Robotics Institute \\
Carnegie Mellon University \\
{\tt \{wdu2, fi, maxim\}@cs.cmu.edu} }
\begin{document}
\maketitle
\begin{abstract}
Heuristic search-based planning techniques are commonly used for motion planning on discretized spaces.
The performance of these algorithms is heavily affected by the resolution at which the search space is discretized.
Typically a fixed resolution is chosen for a given domain.
While a finer resolution allows for better maneuverability, it significantly increases the size of the state space, and hence demands more search efforts.
On the contrary, a coarser resolution gives a fast exploratory behavior but compromises on maneuverability and the completeness of the search.
To effectively leverage the advantages of both high and low resolution discretizations, we propose Multi-Resolution~A*~(MRA*) algorithm, that runs multiple weighted-A*(WA*) searches having different resolution levels simultaneously and combines the strengths of all of them.
In addition to these searches, MRA* uses one \emph{anchor} search to control expansions from these searches.
We show that MRA* is bounded suboptimal with respect to the anchor resolution search space and resolution complete.
We performed experiments on several motion planning domains including 2D, 3D grid planning and 7 DOF manipulation planning and compared our approach with several search-based and sampling-based baselines.
\end{abstract}
\section{Introduction}
Search-based planners are known to be sensitive to the size of state spaces.
The three main factors that determine the size of a state space are the state dimension, the resolution at which each dimension is discretized and the size of the environment or the map~\cite{DBLP:journals/access/ElbanhawiS14}.
The size of state spaces grow exponentially with increased dimension and polynomially with increased resolution.
Search-based planning methods discretize the configuration space into cells.
A cell is the smallest unit of this discrete space and represents a small volume of configuration space state that lies within it.
The resolution of the discretization determines the size of a cell.
A representative state within a cell, commonly its geometric center is picked to denote a vertex for that cell.
\begin{figure}[t]
    \begin{subfigure}[b]{0.49\columnwidth}
        \centering
        \fbox{\includegraphics[width=0.95\columnwidth]{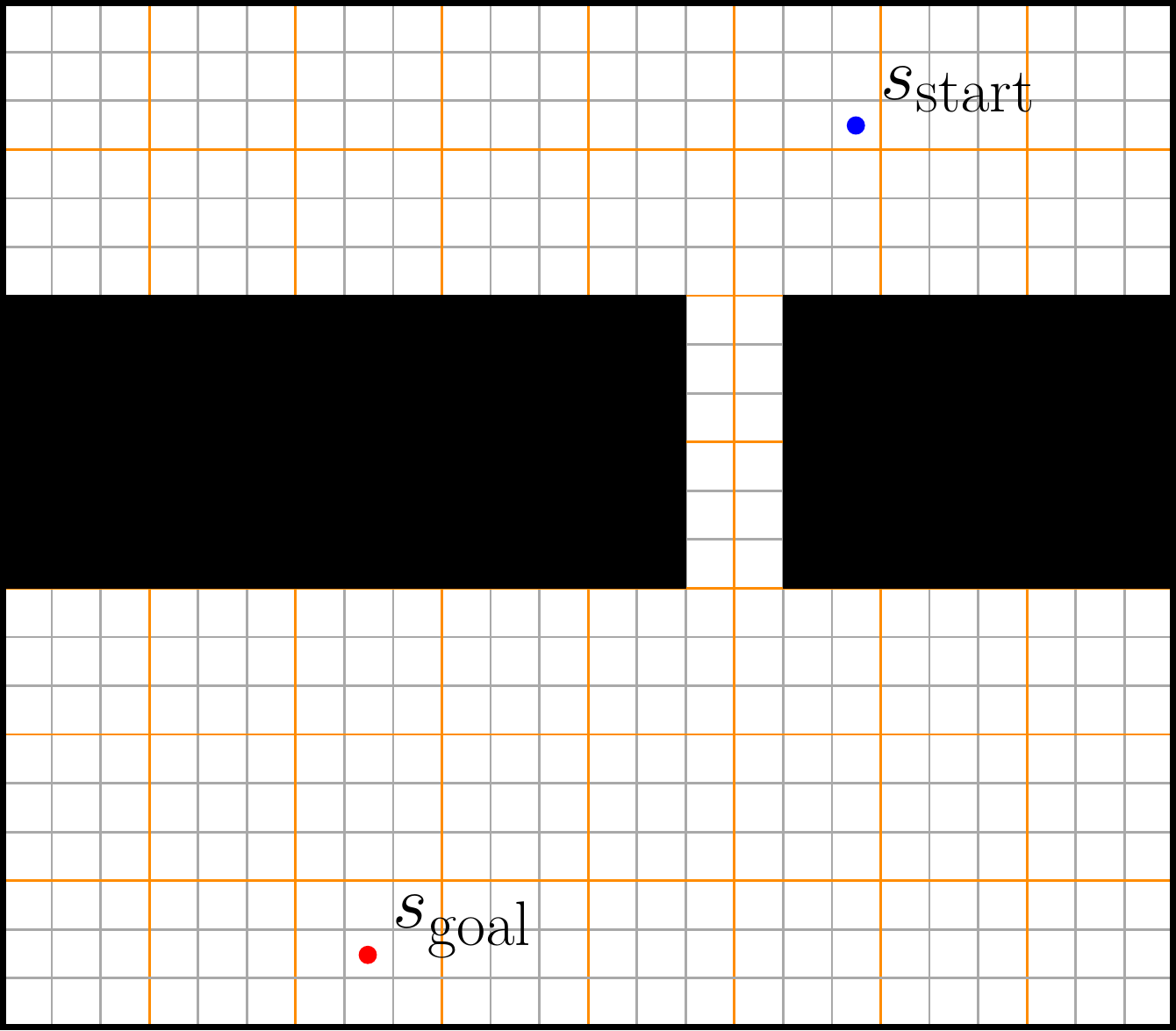}}\caption{Narrow passage}\label{fig:narrow_pasage}
    \end{subfigure}
    \begin{subfigure}[b]{0.49\columnwidth}
        \centering
        \fbox{\includegraphics[width=0.95\columnwidth]{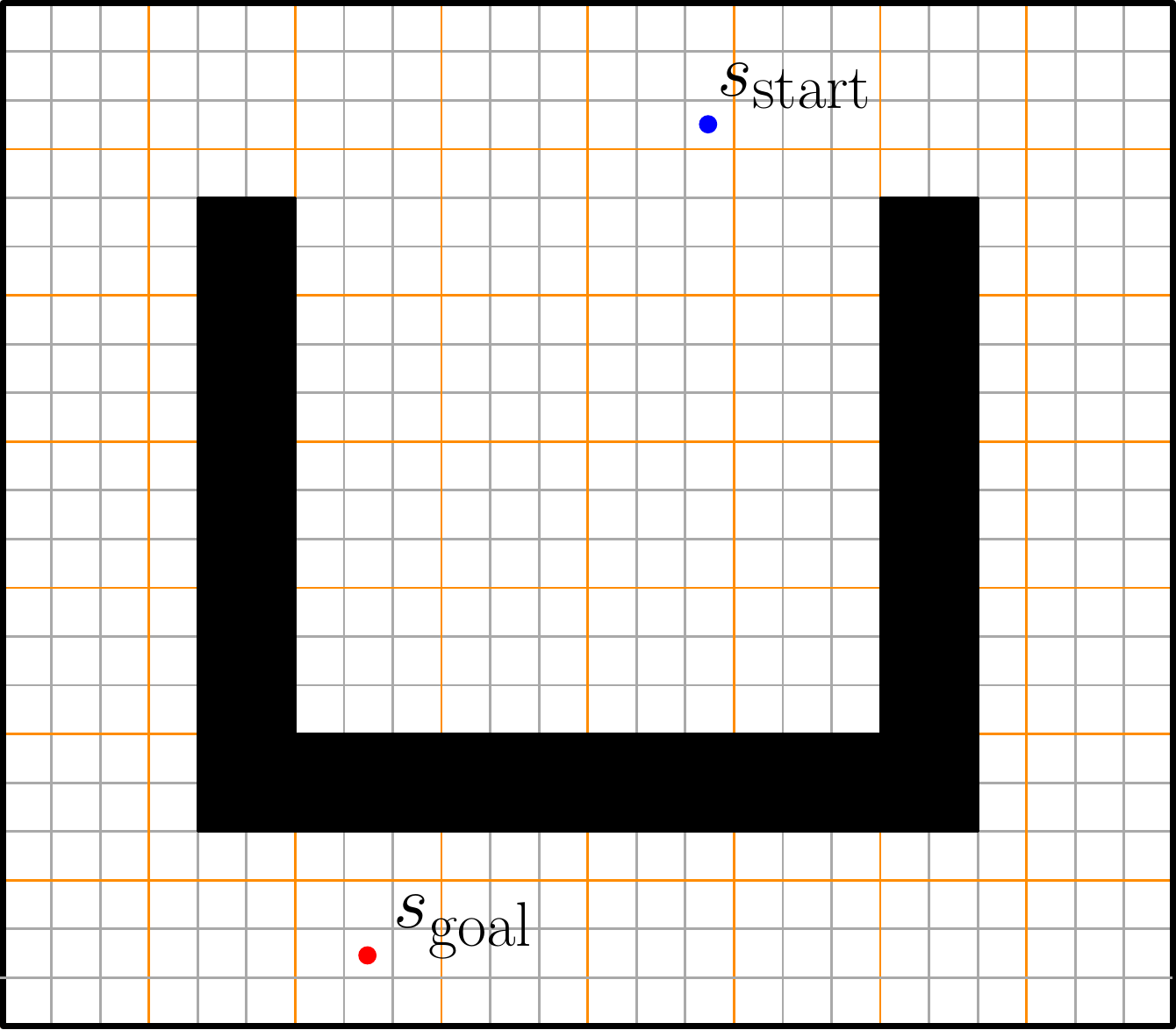}}\caption{Cul-de-sac}\label{fig:local_minima}
    \end{subfigure}
    \caption{Discretization of maps with high~(grey) and low~(orange) resolutions. 
            To the left, solution via transitions only through coarse cells does not exist.
            To the right, on the other hand, it is computationally expensive for the search to escape local minimum on a high resolution map.}
\label{fig:cartoonExp}
\end{figure}

Consider a large sized map most of which is free space, yet it has a number of narrow passages, which the planner has to find paths through for a point robot. Fig.~\ref{fig:cartoonExp} shows two snippets from this map discretized at two resolution levels.
For the example snippet shown in Fig.~\ref{fig:narrow_pasage}, to find a path from~$s_{\textrm{start}}$ to~$s_{\textrm{goal}}$, a search with the coarse resolution space will fail since the passage is too narrow for any of the coarse cells to be traversable. Not only does a low resolution space weaken the completeness guarantee, but it also sacrifices solution quality.

Consider another example map shown in Fig.~\ref{fig:local_minima}. For this problem instance, it is evident that the high resolution search would require a lot more expansions before it escapes the local minimum than the lower resolution search. 
Clearly, some portions of a map are best to be searched with coarse resolution while other portions may require a different, finer, resolution to find a solution.
To this end, we propose the Multi-Resolution A*~(that we shorten as MRA*) algorithm to combine the advantages of different resolution discretizations by employing multiple weighted-A*~(WA*)~\cite{pohl1973avoidance} searches that run on the different resolution state spaces simultaneously.

MRA* uses multiple priority queues that correspond to searches at each resolution level.
However, states from different discretizations that coincide are considered as the same state and thus, when generated by any search, they are shared between corresponding queues.
Our approach bears some resemblance to Multi-Heuristic A*~(MHA*) algorithm~\cite{doi:10.1177/0278364915594029};
MHA* uses multiple possibly inadmissible heuristics in addition to a single consistent anchor heuristic which is used to provide suboptimality bounds.
Instead of taking advantage of multiple heuristics in different searches, we leverage multiple state spaces at different resolutions.
To provide suboptimality guarantees we use an anchor search which runs on a particular resolution space.
We prove that MRA* is bounded suboptimal with respect to the optimal path cost in the anchor resolution space and resolution complete~\cite{lavalle2006planning}.

We conduct experiments on planning in~2D and~3D and on manipulation planning for a~7-DOF robotic arm, and compare MRA* with other search-based algorithms and sampling-based algorithms.
The results suggest that MRA* outperforms other algorithms for various performance metrics.

\section{Related Work}
Motion planning in high dimensional and large-scale domains is challenging both for search-based and sampling-based approaches~\cite{DBLP:journals/corr/abs-1806-07457}.

Sampling-based methods are popular candidates for high-dimensional motion planning problems. They have an advantage that they do not rely on discretizations, rather they use random sampling to discretize the state space.
Randomized methods such as RRT~\cite{lavalle2006planning} and RRT-Connect~\cite{DBLP:conf/icra/KuffnerL00} quickly explore high-dimensional space due to their random sampling feature.
Although fast, these algorithms are non-deterministic and provide no guarantees on the quality of solutions that they found.
Optimal variants such as RRT*~\cite{karaman2011sampling} provide asymptotic optimality guarantees, namely, they reach optimal solution as the number of samples grows to infinity.
Following RRT*, a family of algorithms including FMT*~\cite{DBLP:journals/ijrr/JansonSCP15}, RRT*-Smart~\cite{islam2012rrt} and Informed-RRT*~\cite{DBLP:conf/iros/GammellSB14} were developed to improve the convergence rate of RRT*.
These algorithms improve the quality of the solutions over time but do not provide bounds on the intermediate solution quality.
Moreover they often give inconsistent solutions - generate very different solutions for similar start and goal pairs - due to their inherent randomised behavior.

It is well-known that search-based planners suffer from the \textit{curse of dimensionality}~\cite{Bellman:1957}. They rely on a specific space discretization, the choice of which largely affects the computational complexity and properties of the algorithm. Several methods have been proposed to alleviate this problem on discrete grids.
Moore et al. came up with the Parti-game algorithm~\cite{DBLP:journals/ml/MooreA95}, which adaptively discretizes the map with high resolution at the border between obstacles and free space and low resolution on large free space.
Similarly, this notion is implemented via quad-tree search algorithms~\cite{DBLP:conf/icra/GarciaKB14,DBLP:conf/icra/YahjaSSB98}.
These algorithms are memory efficient in sparse environments, however, in cluttered environments, these approaches show little to no advantages over uniformly discretized map because of the overhead in book-keeping of the graph edges.
In our experiments, we show comparison with one of these adaptive discretization methods.

In addition to grid search, search over implicit graphs formulated as state lattices~\cite{DBLP:conf/iros/PivtoraikoK05} is ubiquitous in both navigation and planning for manipulation~\cite{DBLP:conf/icra/CohenCL10}.
These methods rely on motion primitives which are short kinematically feasible motions that the robot can execute.
In~\cite{DBLP:journals/ijrr/LikhachevF09}, graph search for autonomous vehicles was run on a multi-resolution lattice state space.
More specifically, they used high resolution space close to the robot or goal region and a low resolution action space elsewhere.
Similarly, the Hierarchical Path-Finding A*~(HPA*) algorithm~\cite{botea2004near} pre-processes maps into different levels of abstractions.
Then the complete solution is constructed by concatenating segments of trajectories within a local cluster which belongs to higher level abstraction path.
This approach relies on the condition that there is a smooth transition between high and low resolution abstractions.
Besides, these hierarchical structures require large memory footprint for maintaining the different abstractions and have significant computational overhead for pre-processing.
Compared to HPA*, MRA* runs search over an implicitly constructed graph (generated on the fly during search) and therefore, it requires less memory and no precomputation overhead.

Another class of methods plan in non-uniform state dimension and action to reduce the size of search state spaces~\cite{DBLP:conf/icra/CohenSCL11,DBLP:journals/ijrr/CohenCL14}.
Cohen et al. observed that not all the joints of a manipulator need to be active throughout the search, for example the joints at the end-effector might only be required to move near the goal region.
By restricting the search dimension in this manner, they gain considerable speedups.
Though efficient, this approach could potentially sabotage the completeness of the search.
To overcome this limitation, planning with adaptive dimensionality~\cite{DBLP:conf/aips/GochevSL13,DBLP:conf/socs/VemulaMO16} allows searching in lower dimension most of the time and only requires searching in the high dimension when necessary.
On related lines,~\cite{DBLP:conf/icra/BrockK01} decomposes the original problem into several high-dimensional and low-dimensional sub-problems in a divide-and-conquer fashion.
Their method provides guarantees on completeness but not optimality.
Our approach is different from these methods in that our decomposition is based on multiple resolutions instead of multiple dimensions in a way that provides completeness and bounded suboptimality guarantees.

\section{Multi-Resolution A*}
In a nutshell, MRA* employs multiple WA* searches in different resolution spaces (high and low) simultaneously and shares the states that coincide on the respective discretizations.
To gain more benefit out of the algorithm, the resolutions should be selected such that more sharing is facilitated.
If no sharing is allowed at all, the algorithm would degenerate into several independent searches and the solution will be returned by any search that would satisfy the termination criterion first.
In addition to these searches, MRA* uses an anchor search which is an optimal A* search, to anchor the state expansions from these searches in order to provide bounds on the solution quality.
In the remainder of this section we formally describe our algorithm.
We will also discuss the theoretical properties of this algorithm.

\subsection{Problem Definition and Notations}
In the following $S$ denotes a discretized domain.
Given a start state~$s_{\rm start}$ and a goal state~$s_{\rm goal}$ , the planning problem is defined as finding a collision free path from~$s_{\rm start}$ to~$s_{\rm goal}$ in ~$S$.
The cost from~$s_{\rm start}$ to a state~$s$ is denoted as~$g(s)$, optimal cost to come is denoted by~$g^*(s)$ and~$bp(s)$ is a back-pointer which points to the best predecessor of~$s$ (if one exists). The function~$c(s,s')$ denotes non-negative edge cost between any pair of states in~$S$.
Throughout the algorithm the anchor search and its associated data structures are indexed by~$0$ whereas other searches are denoted with indices~$1$ through~$n$.

We have multiple action sets $\{A_0, A_1, \ldots, A_n\}$ corresponding to different resolution spaces,
where~$A_i$ is a set of actions for resolution~$i$.
$\textsc{Succs}(s,i)$ returns all successors of $s$ for resolution~$i$ generated using the action space~$A_i$. 
$\textsc{GetSpaceIndices}(s)$ returns a list of indices of all the spaces which the state $s$ coincides with.
Furthermore, we assume that we have access to a consistent heuristic function~$h(s)$. 
Each WA* search uses a priority queue~$\textsc{OPEN}_i$ with the priority function~$\textsc{Key}(s,i)$ and a list of expanded states~$\textsc{CLOSED}_i$.
In the priority function~$\textsc{Key}(s,i)$(Alg.\ref{alg:mra_exp} Line~\ref{alg:mra_exp:priority}), all WA* searches share the same weight~$\omega_1$.
Additionally, each queue has a function~$\textsc{OPEN}_i\textsc{.MinKey()}$ which returns the minimum~$\textsc{Key}$ value for the $i$th queue. It returns~$\infty$ if the queue is empty.

\begin{algorithm}[tb]
\footnotesize
    \caption{Multi-Resolution A*}\label{alg:mra_main}
    \begin{algorithmic}[1]
        \Procedure{Main}{}
            \State $g(s_{\rm start}) = 0$; $g(s_{\rm goal}) = \infty$ \label{alg:mra_main:init1}
            \State $bp(s_{\rm start}) = bp(s_{\rm goal}) = $ \textbf{null}\label{alg:mra_main:bp}
            \For{$i = 0, ..., n$}
                \State OPEN\textsubscript{$i$} $\longleftarrow \emptyset$
                \State CLOSED\textsubscript{$i$} $\longleftarrow \emptyset$
                \If{$i \in$ \Call{GetSpaceIndices}{$s_{\textrm{start}}$}}
                    \State Insert $s_{\rm start}$ in OPEN\textsubscript{$i$} with \Call{Key}{$s,i$}
                \EndIf
            \label{alg:mra_main:init2}
            \EndFor 
            \While{OPEN\textsubscript{$i$} $\neq$ $\emptyset$ for each $i \in \{0, ..., n\}$ }\label{alg:mra_main:ept}
                    \State $i\longleftarrow$ \Call{ChooseQueue}{\ }\label{alg:mra_main:spolicy}
                    \If{OPEN\textsubscript{$i$}.\textsc{MinKey()} $\leq$ $\omega_2$ * OPEN\textsubscript{$0$}.\textsc{MinKey()}}\label{alg:mra_main:anchorcondition}
                        \If {$g(s_{\rm goal}) \leq OPEN\textsubscript{$i$}.\textsc{MinKey()}$}
                            \label{alg:mra_main:goal_cond}
                            \State Return path pointed by $bp(g(s_{\rm goal}))$
                            \label{alg:mra_main:ret_other}
                        \Else
                            \State $s$ = OPEN\textsubscript{$i$}.Pop()
                            \State \Call{ExpandState}{$s, i$}
                            \State Insert $s$ into CLOSED\textsubscript{$i$}\label{alg:mra_main:closeNormal}
                        \EndIf
                        \Else
                            \If {$g(s_{\rm goal}) \leq \omega_2 *OPEN\textsubscript{$0$}.\textsc{MinKey()}$}
                                \label{alg:mra_main:goal_cond2}
                                \State Return path pointed by $bp(g(s_{\rm goal}))$
                                \label{alg:mra_main:ret_anchor}
                            \Else
                                \State $s$ = OPEN\textsubscript{$0$}.Pop()
                                \State \Call{ExpandState}{$s, 0$}
                                \State Insert $s$ into CLOSED\textsubscript{$0$}
                            \EndIf
                        \EndIf
            \EndWhile
        \EndProcedure
    \end{algorithmic}
\end{algorithm}

\begin{algorithm}[tb]
\footnotesize
    \caption{ExpandState}\label{alg:mra_exp}
    \begin{algorithmic}[1]
        \Procedure{Key}{$s,i$}\label{alg:mra_exp:priority}
            \If {$i = 0$}
                \State \Return $g(s) + h(s)$
            \Else
                \State \Return $g(s) + \omega_1 h(s)$
            \EndIf
        \EndProcedure
        \Procedure{ExpandState}{$s$, $i$}\label{alg:mra_exp:entrance}
            \ForAll{ $s'$ $\in$ \Call{Succs}{$s, i$}}\label{alg:mra_exp:succ}
                \If{$s'$ was never generated}
                    \State $g(s') = \infty$; $bp(s') = $ \textbf{null};
                \EndIf
                \If{$g(s') > g(s) + c(s,s')$}\label{alg:mra_exp:improve}
                    \State $g(s') = g(s) + c(s,s')$; $bp(s') = s$
                    \For{each $i \in$ \Call{GetSpaceIndices}{$s'$}} 
                    \label{alg:mra_exp:all_res_start}
                        \If{$s' \notin$ CLOSED\textsubscript{i} }
                            \label{alg:mra_exp:close_end}
                            \State Insert/Update $s'$ in OPEN\textsubscript{$i$} with \Call{Key}{$s',i$}
                        \EndIf
                    \EndFor
                    \label{alg:mra_exp:all_res_end}
                \EndIf
            \EndFor
        \EndProcedure
    \end{algorithmic}
\end{algorithm}
\begin{figure*}[tbh]
    \begin{subfigure}[t]{0.24\textwidth}
        \centering
        \fbox{\includegraphics[width=.95\columnwidth]{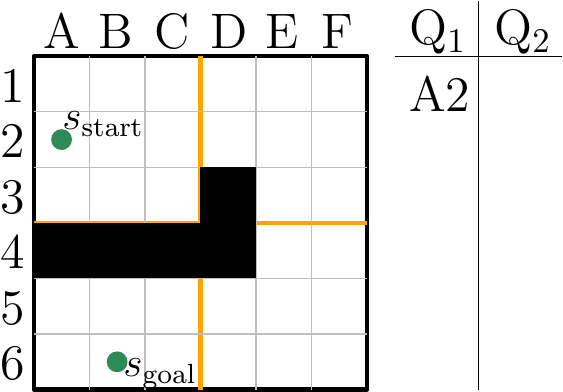}}%
        \caption{MRA* is initialized. State $s_{\rm start}$~(A2) is inserted into ${\rm Q}_1$.}%
        \label{fig:wkexp0}
    \end{subfigure}\hfill
    \begin{subfigure}[t]{0.24\textwidth}
        \centering
        \fbox{\includegraphics[width=.95\columnwidth]{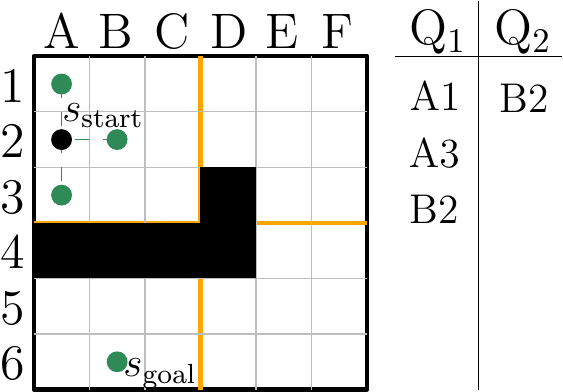}}%
        \caption{State A2 is expanded by high resolution search. Since state B2 lies at the center of a coarse cell, it is also inserted into ${\rm Q}_2$.}%
        \label{fig:wkexp1}
    \end{subfigure}\hfill
    \begin{subfigure}[t]{0.24\textwidth}
        \centering
        \fbox{\includegraphics[width=.95\columnwidth]{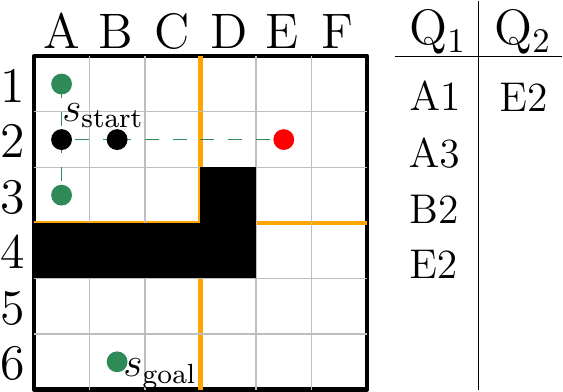}}%
        \caption{State B2 is expanded by low resolution search. The successor~E2 is inserted into both queues.}%
        \label{fig:wkexp2}
    \end{subfigure}\hfill
    \begin{subfigure}[t]{0.24\textwidth}
        \centering
        \fbox{\includegraphics[width=.95\columnwidth]{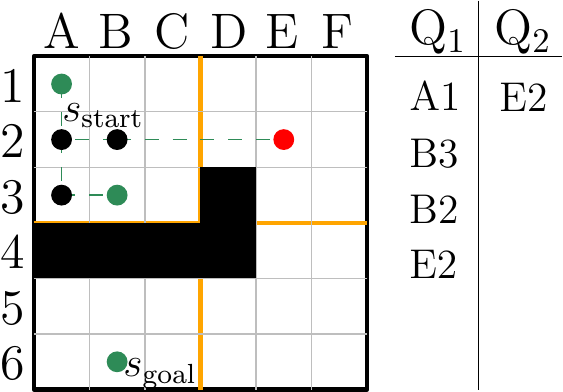}}%
        \caption{State A3 is expanded by high resolution search.}%
        \label{fig:wkexp3}
    \end{subfigure}\hfill
    \begin{subfigure}[t]{0.24\textwidth}
        \centering
        \fbox{\includegraphics[width=.95\columnwidth]{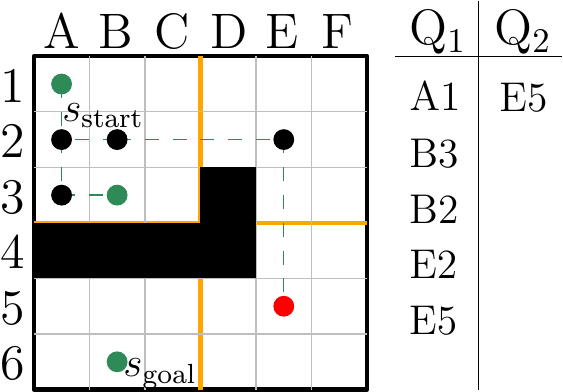}}%
        \caption{State E2 is expanded by low resolution search and the successor E5 is inserted into both queues.}%
        \label{fig:wkexp4}
    \end{subfigure}\hfill
    \begin{subfigure}[t]{0.24\textwidth}
        \centering
        \fbox{\includegraphics[width=.95\columnwidth]{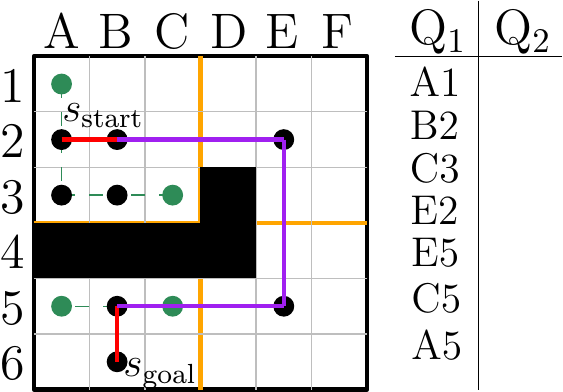}}%
        \caption{The last step when $s_{\rm goal}$ is expanded by high resolution search. A solution~(solid line segments) is found with~$8$ expansions in total.}%
        \label{fig:wkexp5}
    \end{subfigure}\hfill
    \begin{subfigure}[t]{0.24\textwidth}
        \centering
        \fbox{\includegraphics[width=0.93\columnwidth]{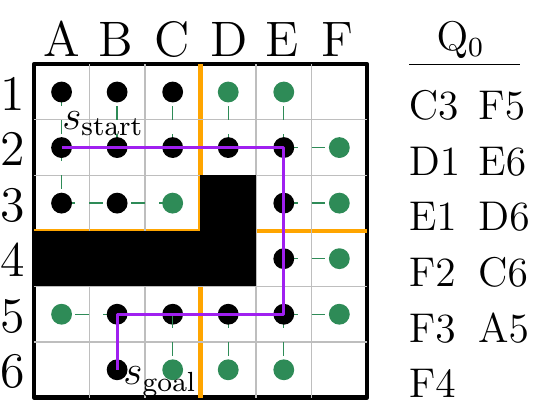}}%
        \caption{The final status of a high resolution search. The same solution~(purple) is found with~$17$ expansions.}%
        \label{fig:wkexp6}
    \end{subfigure}
    \caption{
    Illustration of MRA* algorithm---Thick (orange) lines and thin (grey) lines show the low and high resolution grids respectively.
    The heuristics used is Manhattan distance.
    MRA* initializes in Fig.~\ref{fig:wkexp0}. Figs.~\ref{fig:wkexp1} to ~\ref{fig:wkexp4} show the first four expansions of MRA* and Fig.~\ref{fig:wkexp5} shows the last expansion when the search terminates.
    OPEN lists for the high and low resolution searches are denoted as ${\rm Q}_1$ and ${\rm Q}_2$ respectively.
    Expanded states are shown in black, states in OPEN lists are shown in green and the states that coincide between the two spaces are shown in red.
    The path returned by MRA* is composed of edges from both high~(red) and low~(purple) resolution spaces.
    Fig.~\ref{fig:wkexp6} illustrates the behaviour of WA* search only in the high resolution grid.
    }
\label{fig:alg_example}
\end{figure*}

\subsection{Algorithm}
The main algorithm is presented in Alg.~\ref{alg:mra_main}.
The lines~\ref{alg:mra_main:init1}-~\ref{alg:mra_main:init2} initialize the $g$ values and back pointers of~$s_{\rm start}$ and~$s_{\rm goal}$, and OPEN and CLOSED for each queue and insert ~$s_{\rm start}$ into all queues with which~$s_{\rm start}$ coincides with the corresponding priority values.

The algorithm runs until all the priority queues get empty~(line~\ref{alg:mra_main:ept}) or any of the two termination criteria (lines~\ref{alg:mra_main:goal_cond} or~\ref{alg:mra_main:goal_cond2}) are met.
At line~\ref{alg:mra_main:spolicy}, in function~$\textsc{ChooseQueue()}$, we employ a scheduling policy to make decision on from which non-empty queue to expand a state in current iteration.
This scheduling policy could be a round-robin strategy, Dynamic Thompson Sampling~(DTS) policy or other scheduling policies, as is suggested in~\cite{DBLP:conf/ijcai/PhillipsNAL15}~\footnote{In DTS policy, the selection of a queue is viewed as a \textit{multi-arm bandit} problem~\cite{DBLP:conf/icmla/GuptaGA11}, where the reward from a "bandit"  is equal to the search progress made by the decision, reflected in the decrease of chosen queue's top state's heuristic value.}.
The condition in line~\ref{alg:mra_main:anchorcondition} controls the inadmissible expansions from other queues. 
Inadmissible searches are suspended and anchor search is employed whenever this condition fails.
As a consequence, the solution returned from any search will be within the suboptimality bound~$\omega_2$ of the optimal solution in the anchor space.
The expansions from the anchor queue monotonically increase~$\textsc{OPEN}_0\textsc{.MinKey()}(s)$ as the anchor is a pure A* search, allowing more states to be expanded from the other queues.
The minimum priority state is popped from OPEN\textsubscript{$i$} and then added to the corresponding CLOSED\textsubscript{$i$}.

Details of a state expansion are presented in~Alg.~\ref{alg:mra_exp}.
The \textsc{ExpandState($s,i$)} function  ``partially" expands state $s$ in the search $i$ by using actions~$A_i$.
If the successors of $s$ are duplicates of states in other spaces, they are inserted or updated in the corresponding searches as well.
This is how the paths or the $g$ values of the states are shared between the different searches.
In this procedure, the condition at Line~\ref{alg:mra_exp:improve} indicates that a state will only be updated in a queue if its $g$ value is improved.
A state ~$s'$ is only inserted in a queue if it was not expanded before from the same queue and if it coincides with the discretization of that queue~(see lines~\ref{alg:mra_exp:all_res_start}-~\ref{alg:mra_exp:all_res_end}).

Fig.~\ref{fig:alg_example} provides a simple 2D illustration of the MRA* algorithm.
We use two resolutions (high and low) in this example and MRA* alternatively expands states from the two queues.
The cell size~(the length of a side) of the low resolution space is 3 times the size of the high resolution space.
For the sake of simplicity, we assume that the suboptimality bound $\omega_2$ is very high such that anchor queue is never expanded i.e the condition in line~\ref{alg:mra_main:anchorcondition} is never violated. We also assume that the weights $\omega_1$ are high enough that the WA* searches are purely greedy.
Fig.~\ref{fig:wkexp6} shows the result if we would only run a single high-resolution search for the same example for comparison.
It is evident that benefiting from the sharing feature between multiple resolution spaces, MRA* found the solution with much less expansions than the high resolution WA* search.

\subsection{Analysis}
\begin{theorem}
    MRA* partially expands a state at most once with respect to each inadmissible search and anchor search.
\end{theorem}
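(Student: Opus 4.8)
The plan is to fix an arbitrary search index $i \in \{0,1,\ldots,n\}$ and an arbitrary state $s$, and to track the life cycle of $s$ with respect to $\textsc{OPEN}_i$ and $\textsc{CLOSED}_i$, calling a single invocation of $\textsc{ExpandState}(s,i)$ a \emph{partial expansion of $s$ in search $i$}. I would first record two structural facts that are immediate from the pseudocode. First, a partial expansion $\textsc{ExpandState}(s,i)$ is performed in \textsc{Main} only in the branch that begins by popping $s$ from $\textsc{OPEN}_i$ (the block ending at Line~\ref{alg:mra_main:closeNormal}, and the corresponding block for the anchor queue); there is no other call site. Second, each such branch inserts $s$ into $\textsc{CLOSED}_i$ immediately after the call returns. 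Since no line of Alg.~\ref{alg:mra_main} or Alg.~\ref{alg:mra_exp} ever removes an element from any $\textsc{CLOSED}_i$, membership in $\textsc{CLOSED}_i$ is monotone: once $s \in \textsc{CLOSED}_i$, it remains so for the rest of the run.

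Next I would enumerate every point at which a state is inserted into $\textsc{OPEN}_i$: exactly the initialization block (Line~\ref{alg:mra_main:init2}, executed once, before any expansion occurs), and the ``Insert/Update'' step inside $\textsc{ExpandState}$ within the loop over $\textsc{GetSpaceIndices}$ (Lines~\ref{alg:mra_exp:all_res_start}--\ref{alg:mra_exp:all_res_end}), which is guarded by the test $s' \notin \textsc{CLOSED}_i$ at Line~\ref{alg:mra_exp:close_end}. Combining this guard with the monotonicity of $\textsc{CLOSED}_i$ yields the key invariant: after $s$ has been placed in $\textsc{CLOSED}_i$, no subsequent expansion by any search can re-insert $s$ into $\textsc{OPEN}_i$. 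Because $\textsc{OPEN}_i$ is a priority queue keyed by state (so $s$ has at most one entry at a time and popping it removes that entry), it follows that $s$ can be popped from $\textsc{OPEN}_i$ at most once over the whole run, and hence $\textsc{ExpandState}(s,i)$ is invoked at most once.

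The one place I expect to need care is the short window between popping $s$ from $\textsc{OPEN}_i$ and inserting it into $\textsc{CLOSED}_i$: during that window $\textsc{ExpandState}(s,i)$ executes while $s \notin \textsc{CLOSED}_i$, so in principle it could re-queue $s$ into $\textsc{OPEN}_i$. This would require $s \in \textsc{Succs}(s,i)$ together with the improvement condition $g(s) > g(s) + c(s,s)$ at Line~\ref{alg:mra_exp:improve}, i.e.\ a self-loop edge with $c(s,s) < 0$; since all edge costs are non-negative by assumption, this condition is never met, so $s$ is not re-queued during its own partial expansion (and any successor $s' \neq s$ re-queued in that call is a different state, irrelevant to $s$). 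Assembling the pieces---expansion triggered only by a pop, immediate closing after the pop, monotone $\textsc{CLOSED}_i$, the $\textsc{CLOSED}_i$-guard on all later insertions, and the exclusion of self-re-insertion---shows $s$ is partially expanded at most once in search $i$; since $i$ and $s$ were arbitrary, the theorem follows. The main obstacle, then, is purely the careful ordering argument around that pop-to-close window together with ruling out zero/negative self-loops; everything else is bookkeeping over the insertion sites.
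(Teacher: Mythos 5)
Your proposal is correct and follows essentially the same route as the paper, which simply asserts the result ``by construction'' pointing to the $\textsc{CLOSED}_i$ guard in lines~\ref{alg:mra_exp:all_res_start}--\ref{alg:mra_exp:all_res_end} of Alg.~\ref{alg:mra_exp}; you have merely spelled that construction out in full (pop--expand--close ordering, monotonicity of $\textsc{CLOSED}_i$, the guard on every re-insertion). Your extra care about the pop-to-close window and non-negative self-loops is a detail the paper leaves implicit, but it does not change the argument.
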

\noindent
This holds true by construction (see lines~\ref{alg:mra_exp:all_res_start}-~\ref{alg:mra_exp:all_res_end})
\noindent

\begin{theorem}
MRA* is complete in the union space of all~$n~+~1$ resolution spaces.

\end{theorem}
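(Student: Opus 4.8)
\noindent
The plan is to show that whenever a path from $s_{\rm start}$ to $s_{\rm goal}$ exists in the union graph $G$ --- whose vertices are the states of the $n{+}1$ discretizations (coinciding states identified) and whose edges are $\bigcup_{i=0}^{n} A_i$ --- MRA* halts returning such a path. First I would isolate the structural fact that makes the sharing work: every edge of $G$ lies in some $A_i$, and its endpoints then coincide with resolution $i$, so by lines~\ref{alg:mra_exp:all_res_start}--\ref{alg:mra_exp:all_res_end} each endpoint is (re)inserted into $\textsc{OPEN}_i$ on every strict improvement of its $g$-value unless it already sits in $\textsc{CLOSED}_i$. Termination is then quick: by Theorem~1 a state is partially expanded at most once per search, so there are at most $(n{+}1)|G|$ expansions, each main-loop iteration performs one expansion or returns, and hence the loop halts --- either via a return at line~\ref{alg:mra_main:goal_cond} or~\ref{alg:mra_main:goal_cond2}, or by exhausting all open lists.

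The core of the proof is a static ``frontier'' invariant. Fix a path $\pi=(s_0{=}s_{\rm start},\ldots,s_k{=}s_{\rm goal})$ in $G$ and, for $j<k$, pick an index $a_j$ with $(s_j,s_{j+1})\in A_{a_j}$. I would prove that at any moment with $g(s_{\rm goal})=\infty$, the largest index $j$ with $g(s_j)<\infty$ --- which exists, since $g(s_{\rm start})=0$ after initialization, and is strictly below $k$ --- satisfies $s_j\in\textsc{OPEN}_{a_j}$. The argument: $s_j$ acquired its finite $g$ at initialization or at line~\ref{alg:mra_exp:improve}, so it was inserted into $\textsc{OPEN}_{a_j}$ (it coincides with resolution $a_j$) unless it was already in $\textsc{CLOSED}_{a_j}$; but $s_j\in\textsc{CLOSED}_{a_j}$ would mean $s_j$ had been expanded in search $a_j$, and since $s_{j+1}\in\textsc{Succs}(s_j,a_j)$ line~\ref{alg:mra_exp:improve} would then already have made $g(s_{j+1})$ finite, contradicting the maximality of $j$. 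As $g$-values never increase, $s_j$ therefore remains in $\textsc{OPEN}_{a_j}$ until popped.

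Combining these yields the theorem. Suppose, for contradiction, that MRA* halts by exhausting all open lists rather than by returning. At that instant, if $g(s_{\rm goal})=\infty$ the frontier invariant exhibits a nonempty $\textsc{OPEN}_{a_j}$, a contradiction; and if $g(s_{\rm goal})<\infty$, then $s_{\rm goal}$ is still in some $\textsc{OPEN}_m$ --- it enters one on its first finite $g$-value (it coincides with at least one resolution), and it can never be closed, because when $s_{\rm goal}$ is popped it is the queue minimum with $\textsc{Key}(s_{\rm goal},m)=g(s_{\rm goal})$ (as $h(s_{\rm goal})=0$), which makes the test at line~\ref{alg:mra_main:goal_cond} fire and return before any expansion --- again a contradiction. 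Hence MRA* halts by a return, with solution cost $g(s_{\rm goal})<\infty$; since the back-pointers always form a tree rooted at $s_{\rm start}$ along which $g$ is consistent with the edge costs, the returned $bp$-path is a genuine $s_{\rm start}$-to-$s_{\rm goal}$ solution.

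The main obstacle I expect is the anchor gate at line~\ref{alg:mra_main:anchorcondition}: one might worry that it permanently starves the inadmissible searches, or that $\textsc{ChooseQueue}$ never serves the frontier queue. The argument above sidesteps progress-tracking entirely by reasoning from the terminal state, but I would still record why starvation cannot happen --- whenever some search runs ``too far ahead'' its gate fails and the anchor (a pure A* search with a consistent heuristic, so $\textsc{OPEN}_0.\textsc{MinKey}()$ is non-decreasing and becomes $\infty$ once $\textsc{OPEN}_0$ empties) is expanded instead, which after finitely many steps opens the gate for good; $\textsc{ChooseQueue}$ need only return a nonempty queue; and it is harmless for some $\textsc{OPEN}_i$, even the anchor's, to empty before the goal is found, since the union of all searches still covers $G$. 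I would also double-check the mild assumptions that $G$ is finite and that $s_{\rm goal}$ coincides with at least one resolution.
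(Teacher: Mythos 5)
Your proposal is correct, and it does substantially more than the paper does: the paper's entire argument for this theorem is a one-sentence appeal to construction --- MRA* ``terminates only if it finds a solution or all the resolution spaces get exhausted'' (Alg.~\ref{alg:mra_main}, line~\ref{alg:mra_main:ept}). What that sketch leaves implicit, and what you actually prove, is the nontrivial half: that the open lists cannot all become empty while an $s_{\rm start}$--$s_{\rm goal}$ path exists in the union graph, and that the algorithm halts at all. Your frontier invariant (the deepest path vertex with finite $g$ must still sit in $\textsc{OPEN}_{a_j}$ for the resolution $a_j$ of its outgoing path edge, since otherwise its expansion in search $a_j$ would already have given the next vertex a finite $g$), the observation that $s_{\rm goal}$ can never be closed because the tests at lines~\ref{alg:mra_main:goal_cond}/\ref{alg:mra_main:goal_cond2} fire before it could be popped (its key equals $g(s_{\rm goal})$ as $h(s_{\rm goal})=0$), and the termination bound via Theorem~1 supply exactly the content the paper omits; the side assumptions you flag (finite union graph, an edge's endpoints coincide with that edge's resolution, $s_{\rm goal}$ lies in at least one discretization, $\textsc{ChooseQueue}$ returns a non-empty queue) are the right ones, and your remark that anchor-gate ``starvation'' is irrelevant because you argue from the terminal state is a clean way to avoid progress-tracking. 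One caveat: like the paper's prose (``runs until all the priority queues get empty''), you read the loop guard as ``while some $\textsc{OPEN}_i$ is non-empty,'' whereas line~\ref{alg:mra_main:ept} as written says ``while \emph{every} $\textsc{OPEN}_i$ is non-empty,'' under which literal reading the search could quit as soon as a single queue empties and the exhaustion contradiction (indeed the theorem itself) would fail; so your proof establishes the theorem for the intended semantics, and the discrepancy is a defect of the paper's pseudocode rather than of your argument.
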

\noindent
The union space is defined as the space constructed as a result of sharing coincident states between the different resolution spaces.
This theorem also holds by construction as the algorithm terminates only if it finds a solution or all the resolution spaces get exhausted~(Alg.~\ref{alg:mra_main}, line~\ref{alg:mra_main:ept})
\noindent

\begin{theorem}
    In MRA*, solution returned by any search $i$ with total cost $g_i(s_{\rm goal})$ is bounded as:
    $$
        g_i(s_{\rm goal}) \leq \omega_2 * g^*_0(s_{\rm goal})
    $$
    where $g^*_0(s_{\rm goal})$ is the optimal solution with respect to anchor resolution.
\end{theorem}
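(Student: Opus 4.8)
The plan is to follow the standard bounded–suboptimality argument for weighted/anchored A$^*$ (as in MHA$^*$), adapted to account for two features of MRA$^*$: the anchor search is interleaved with the inadmissible searches rather than run to completion, and $g$-values are shared across resolutions. First I would reduce the statement to a single inequality about the anchor queue. Whenever MRA$^*$ returns a solution it does so at line~\ref{alg:mra_main:ret_other} or line~\ref{alg:mra_main:ret_anchor}. In the former case the return is guarded by $g(s_{\rm goal})\le \textsc{OPEN}_i\textsc{.MinKey()}$ while the enclosing branch is entered only when $\textsc{OPEN}_i\textsc{.MinKey()}\le \omega_2\,\textsc{OPEN}_0\textsc{.MinKey()}$ (line~\ref{alg:mra_main:anchorcondition}); in the latter case the guard is directly $g(s_{\rm goal})\le \omega_2\,\textsc{OPEN}_0\textsc{.MinKey()}$. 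So at termination, in either case, $g_i(s_{\rm goal})=g(s_{\rm goal})\le \omega_2\,\textsc{OPEN}_0\textsc{.MinKey()}$, and it remains to show $\textsc{OPEN}_0\textsc{.MinKey()}\le g^*_0(s_{\rm goal})$.

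The core is an invariant on the anchor search, to be proved by induction on the total number of expansions performed by \emph{any} of the $n{+}1$ searches. Fix a monotone optimal path $\pi=(s_0=s_{\rm start},\dots,s_k=s_{\rm goal})$ in the anchor resolution space, and let $j^\star=\min\{\,j:s_j\notin \textsc{CLOSED}_0\,\}$. The invariant has two coupled parts: (i) $s_{j^\star}\in \textsc{OPEN}_0$ with $g(s_{j^\star})\le g^*_0(s_{j^\star})$, and (ii) whenever a state $s_m$ of $\pi$ is popped from $\textsc{OPEN}_0$, at that moment $g(s_m)\le g^*_0(s_m)$. Given (i), at termination $\textsc{OPEN}_0\textsc{.MinKey()}\le \textsc{Key}(s_{j^\star},0)=g(s_{j^\star})+h(s_{j^\star})\le g^*_0(s_{j^\star})+h(s_{j^\star})$, and telescoping consistency of $h$ along $s_{j^\star},\dots,s_k$ (with $h(s_{\rm goal})=0$) gives $h(s_{j^\star})\le g^*_0(s_{\rm goal})-g^*_0(s_{j^\star})$, hence $\textsc{OPEN}_0\textsc{.MinKey()}\le g^*_0(s_{\rm goal})$, which closes the argument. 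One also checks, using $\omega_2\ge1$ and the guard on line~\ref{alg:mra_main:goal_cond2}, that $s_{\rm goal}$ is never popped from $\textsc{OPEN}_0$, so $j^\star\le k$ and the witness is well defined whenever a solution is returned.

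For the induction: a non-anchor expansion only decreases $g$-values and only inserts/updates states in OPEN lists (lines~\ref{alg:mra_exp:all_res_start}--\ref{alg:mra_exp:all_res_end}); it never removes a state from $\textsc{OPEN}_0$ nor enlarges $\textsc{CLOSED}_0$, so it preserves (i) and (ii) trivially. An anchor expansion of $s$ behaves, as far as $\textsc{OPEN}_0$, $\textsc{CLOSED}_0$ and the $A_0$-successors are concerned, like a plain A$^*$ expansion: if $s\ne s_{j^\star}$ the witness is untouched; if $s=s_{j^\star}$ then, since $g(s_{j^\star})\le g^*_0(s_{j^\star})$ at that moment (part (ii), derived from part (i) together with the min-key property and consistency, exactly as in the displayed bound above), its $\pi$-successor $s_{j^\star+1}$ gets $g(s_{j^\star+1})$ lowered to at most $g^*_0(s_{j^\star})+c(s_{j^\star},s_{j^\star+1})=g^*_0(s_{j^\star+1})$, and is reinserted into $\textsc{OPEN}_0$ unless already in $\textsc{CLOSED}_0$; the new $j^\star$ is the first $\pi$-state still outside $\textsc{CLOSED}_0$, and one walks the chain of anchor expansions of $\pi$-states — each of which, by part (ii), carried a $g$-value no larger than $g^*_0$ — to locate the new witness.

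The step I expect to be the main obstacle is precisely this last one. Because $g$-values are shared, the anchor search may close states of $\pi$ out of order (e.g.\ close $s_{j+1}$ before $s_j$), and a closed state is \emph{not} reopened even when a subsequent anchor expansion of its predecessor lowers its $g$-value; so the textbook A$^*$ lemma (``OPEN always contains a $\pi$-state with $g=g^*$'') cannot be quoted verbatim. The remedy is to carry parts (i) and (ii) through the induction \emph{jointly}, inferring at each step that any already-closed $\pi$-state was closed with $g\le g^*_0$ (part (ii)) so that the chain argument above is legitimate; pinning down the non-circular order of this joint induction, and confirming that closing $\pi$-states out of order never orphans the witness, is the delicate part of the proof.
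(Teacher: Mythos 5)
Your proposal is correct, and its top-level reduction is exactly the paper's: split on whether termination occurs at line~\ref{alg:mra_main:ret_other} or line~\ref{alg:mra_main:ret_anchor}, use the guards at lines~\ref{alg:mra_main:anchorcondition}, \ref{alg:mra_main:goal_cond} and~\ref{alg:mra_main:goal_cond2} to get $g_i(s_{\rm goal}) \leq \omega_2 \cdot \text{OPEN}_0.\textsc{MinKey()}$, and then bound $\text{OPEN}_0.\textsc{MinKey()}$ by $g^*_0(s_{\rm goal})$. Where you genuinely differ is in justifying that last inequality: the paper simply asserts it ``because the anchor search is A*'' and cites~\cite{pearl1984heuristics}, i.e.\ it quotes the textbook property that the minimum $f$-value in an admissible A* OPEN list never exceeds the optimal solution cost. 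You instead re-derive this property for MRA*'s actual setting---interleaved anchor expansions, $g$-values shared across resolutions, and no reopening of states in $\text{CLOSED}_0$---where states on the anchor-optimal path can be closed out of order with non-optimal $g$, so the textbook lemma does not apply verbatim. Your joint invariant (witness in $\text{OPEN}_0$ with $g \leq g^*_0$, plus ``any path state popped from $\text{OPEN}_0$ has $g \leq g^*_0$ at that moment,'' the latter following from the former via the min-key comparison and consistency) is exactly the right repair, and the chain argument for relocating the witness past already-closed path states does go through without circularity, since each such state had relaxed its path successor to a value at most $g^*_0$ at closure time. Your route buys a self-contained proof of the step the paper delegates to a citation that does not literally cover the shared-$g$, no-reopening anchor; the paper's version buys brevity by leaning on the analogy with standard A* (and MHA*-style anchor analyses).
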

\begin{proof}
If the anchor search terminates at Alg.~\ref{alg:mra_main}, line~\ref{alg:mra_main:ret_anchor} then because the anchor search is an optimal A* search, from~\cite{pearl1984heuristics}, we have

\begin{equation} \label{eq1}
\begin{split}
g_0(s_{\rm goal}) \leq \omega_2 * g_0^*(s_{\rm goal})
\end{split}
\end{equation}

If any other search terminates (Alg.~\ref{alg:mra_main}, line~\ref{alg:mra_main:ret_other}), then from lines~\ref{alg:mra_main:anchorcondition} and~\ref{alg:mra_main:goal_cond}, and because the anchor search is A* search we have,
\begin{equation} \label{eq2}
\begin{split}
g_i(s_{\rm goal}) & \leq  \omega_2 * \text{OPEN}\textsubscript{$0$}.\textsc{MinKey()} \\
                  & \leq \omega_2 * g_0^*(s_{\rm goal}) \text{ From~\cite{pearl1984heuristics}}
\end{split}
\end{equation}
\end{proof}

\section{Experiments and Results}
We evaluate our algorithm on~2D,~3D and~7D domains and report comparisons with different search-based and sampling-based planning approaches in terms of planning time, solution cost, number of expanded states~(only for search-based algorithms) and success rates. All experiments were run on an Intel~i7-3770 CPU~(3.40 GHz) with 16GB RAM.
In all experiments, we set a timeout of~$120$ seconds.
For 2D and 3D spaces, we used 8-connected and 26-connected grids. For 7D experiments we used PR2 robot's single-arm and constructed the graph using a manipulation lattice~\cite{DBLP:conf/icra/CohenSCL11}.
The heuristics used for~2D and~3D domains are octile distance and euclidean distance respectively.
For manipulation problems, the heuristic was computed by running a backward 3D Dijkstra's from the end-effector's position at the 6-DoF goal pose.
We used Euclidean distance as cost function for 2D and 3D, and Manhattan distance in joint angles for 7D.
For all the domains, the anchor search of MRA* is set as the highest resolution space.
As the queue selection policy, we used round-robin policy for~2D and~3D, and DTS for the 7D domain.
For every domain, we plot statistics showing improvements of MRA* over baselines, where improvements are computed as the average metric values of baselines divided by that of MRA*'s~ (Fig.~\ref{fig:all_results}). For these plots we only report results for common success tests.
In addition, we also show tabulated results for all the metrics (Table~\ref{table:rlt23d}).
The code of MRA* algorithm will be available here\footnote{http://www.sbpl.net/Software}.
\begin{table*}[ht]
    \caption{2D and 3D planning results.}\label{table:rlt23d}
    \begin{subtable}{\textwidth}
        \caption{2D Planning Results~(Map1 \& Map2)} \label{table:rlt2d}
        \resizebox{\columnwidth}{!}{%
             \begin{tabular}{c|c|c|c|c|c||c|c|c|c|c}
                    \hline \hline
                     & \multicolumn{5}{|c||}{Map1} & \multicolumn{5}{|c}{Map2}\\
                    \hline
                    Algorithm  & MRA*   & WA-MR   & WA-High   & WA-Low & QDTree &  MRA*   & WA-MR   & WA-High   & WA-Low & QDTree\\
                    \hline 
                    Success Rate (\%)  & \textbf{100}  & \textbf{100}  & \textbf{100}  & 95.5  & \textbf{100} & \textbf{100}    & 98.99   & 98.99 & 94.95  & \textbf{100}\\
                    \hline                           
                    Mean Time (s)      & 0.61   & 5.72   & 5.62  & \textbf{0.09} & 0.15 & 4.14 & 18.23  & 17.73  & \textbf{0.22} & 0.44\\
                    \hline                           
                    Mean Cost ($m$)  & \textbf{324.71} & 325.76  & 326.32 & 326.71 & 341.49 & \textbf{377.91}  & 379.51  & 382.35 & 380.55  & 396.93\\
                    \hline \hline
            \end{tabular}}
    \end{subtable}%
    \hfill
    \begin{subtable}{\textwidth}
        \centering
        \caption{3D Planning Results~(Map1 \& Map2)} \label{table:rlt3d}
        \resizebox{\columnwidth}{!}{%
             \begin{tabular}{c|c|c|c|c||c|c|c|c}
                    \hline \hline
                     & \multicolumn{4}{|c||}{Map1} & \multicolumn{4}{|c}{Map2}\\
                    \hline
                    Algorithm  & MRA*   & WA-MR   & WA-High   & WA-Low  &  MRA*   & WA-MR   & WA-High   & WA-Low \\
                    \hline 
                    Success Rate (\%)  & 100  & 100  & 100  & 100 & 100   & 100 & 100  & 100\\
                    \hline                           
                    Mean Time (s)      & 3.12   & 19.01  & 18.88 & \textbf{0.06} & 4.16    & 24.16  & 13.71 & \textbf{0.07}\\
                    \hline                           
                    Mean Cost ($m$)  & 40.13  & 38.38  & \textbf{37.04}   & 40.20 & 32.35   & 30.45 & \textbf{28.83}  & 31.89\\
                    \hline \hline
            \end{tabular}}
    \end{subtable}
    \label{table:results}
\end{table*}
\subsection{2D Space Planning Results}
\begin{figure}[tb]
    \centering
    \includegraphics[width=.95\columnwidth]{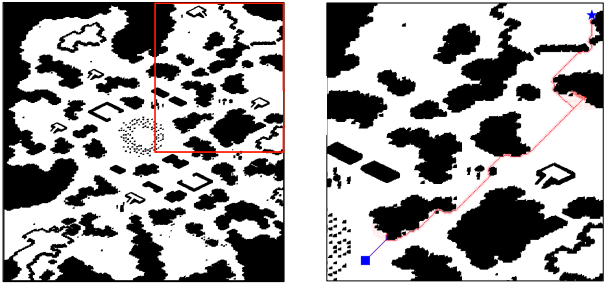}
    \caption{A 2D solution example. The planner is planning from start~(square) to goal~(star). 
    The red dots are expanded states and the blue line is the solution returned by planner.}\label{fig:2dsolexp}
\end{figure}
\subsubsection{Domain:}
We used two different maps discretized into~$10,000\times10,000$ cells as the highest resolution discretization.
Additionally, we have middle and low resolutions whose cells are 7 and 21 times the size of highest resolution cells respectively.
The benchmark maps are from \textit{Moving AI Lab}~\cite{sturtevant2012benchmarks} \textit{Starcraft} category.
For each map, we have~$100$ randomly generated start and goal pairs.
We compare our algorithm with four baselines, three of which search over implicit graph. These are WA* with Multiple Resolutions (WA-MR), WA* with highest resolution (WA-High) and with lowest resolution (WA-Low). WA-MR's action space uses the union of all the resolution spaces in a single queue.
The fourth baseline searches over a pre-constructed explicit graph that is the quad-tree search method~\cite{DBLP:conf/icra/GarciaKB14} (QDTree).
In quad-tree experiments, to book-keep neighbors of a grid, we followed the methods suggested in~\cite{DBLP:journals/cacm/LiL87,DBLP:journals/cacm/LiL87a}.
For our algorithm, we set the~$\omega_1$ and~$\omega_2$ values both to~$3.0$.
For other search-based algorithms, we set the weight to~$3.0$ as well, which would enforce the same suboptimality bounds for all the algorithms.

\subsubsection{Results and Analysis:}
The results of 2D planning are presented in Fig.~\ref{fig:rlt2d} and Table.~\ref{table:rlt2d}.
A test map and a sample solution from MRA* is shown in Fig.~\ref{fig:2dsolexp}.
In the top-right region of the right figure, we can see that MRA* sparsely searched the local minimum region and exited swiftly.
This is consistent with the behaviour that we described in Fig.~\ref{fig:alg_example}.

Our algorithm outperforms WA-MR and WA-High in speed and number of expansions as shown in Fig.~\ref{fig:rlt2d}. The speedup comes from the fact that WA-MR performs a full expansion of a every state which is expensive whereas MRA* only uses partial expansions.
WA-High searches only in the highest resolution which is also expensive, MRA* on the other hand leverages the low resolution space to quickly escape local minima and uses the high resolution space to plan through narrow passages.
WA-Low is faster than MRA* since it only searches in the lowest resolution space, but it also makes it incomplete with respect to the high resolution space. This is verified by the lowest success rate in Table.~\ref{table:rlt2d}.
QDTree is faster compared to MRA* because the quad-tree map discretization is done in such a way that large open spaces are not further discretized into smaller units, this helps to keep the size of state space small. However the graph construction step is computationally expensive and had an average pre-computation time of ~$36$ seconds for the two maps.
The quality of solutions as indicated by the average solution costs in Table.~\ref{table:rlt2d} for each algorithm is comparable except QDTree which relatively shows higher costs.
This is because QDTree has very coarse discretization in free spaces.
\subsection{3D Space Planning Results}
\subsubsection{Domain:}
For 3D also we used two maps, one of them is shown in Fig.~\ref{fig:map_3d}. 
The other map contains outdoor scenes such as mountains and buildings etc.
In the highest resolution, the maps are discretized to a grid of size~$1000\times1000\times400$ cells.
Similar to~2D spaces, we have middle and low resolutions that are~$9$ and~$27$ times the size of the highest resolution respectively.
There are~$50$ trails in total where start and goal pairs for each trial are randomly assigned.
For 3D experiments we only compared with the baselines which search on implicit graphs i.e. WA-MR, WA-High and WA-Low as the overhead of constructing the explicit abstraction for this domain is very high.
In our algorithm, we set the~$\omega_1$ and~$\omega_2$ value both to~$3.0$.
For other search based algorithms, we set the weights to~$3.0$ as well.
\begin{figure}[tb]
    \centering
    \fbox{\includegraphics[width=.95\columnwidth]{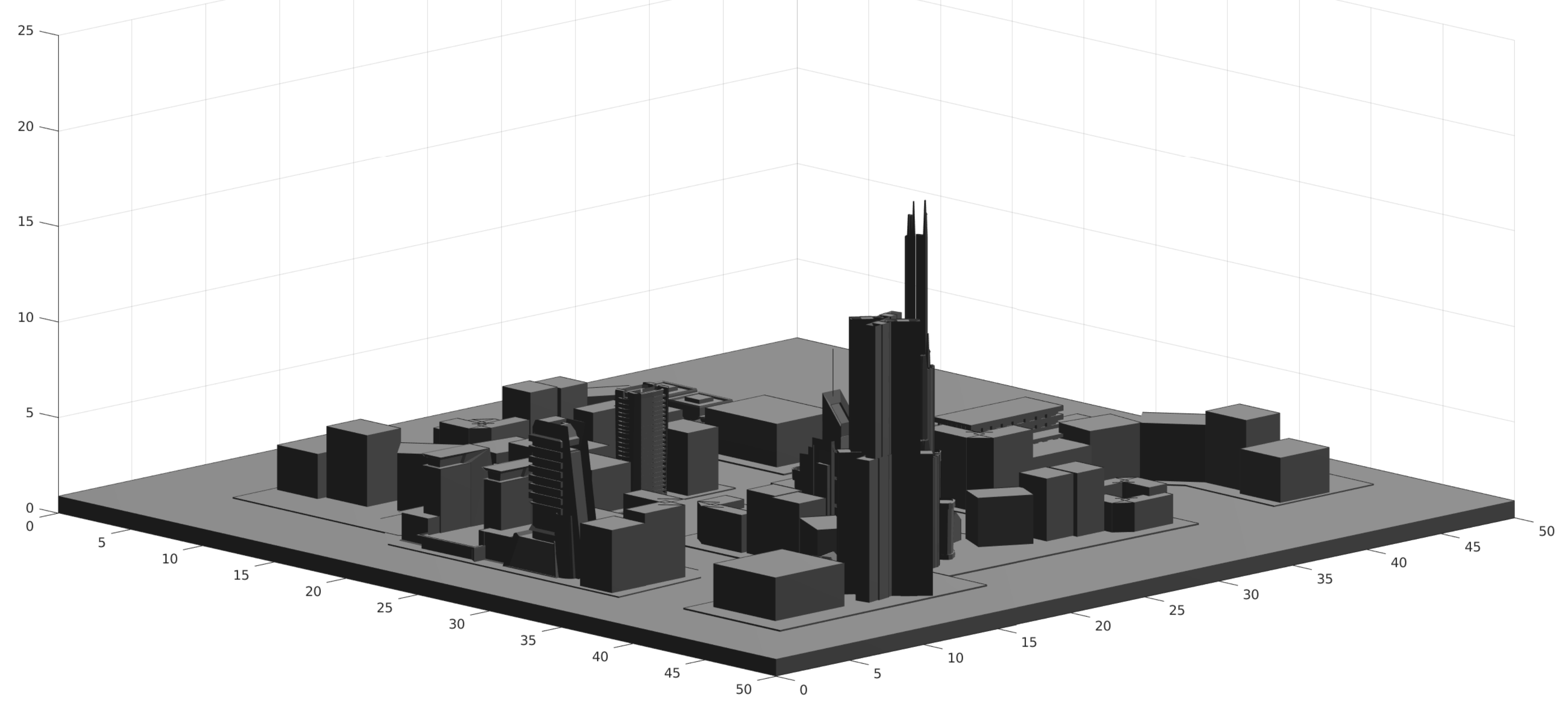}}
    \caption{A mesh model of city used as a planning scene for 3D planning.}\label{fig:map_3d}
\end{figure}
\subsubsection{Results and Analysis:}
The results for scene Fig.~\ref{fig:map_3d} are presented in Fig.~\ref{fig:rlt3d}.
With the same branching factor, WA* in coarse resolution space is significantly faster.
As mentioned earlier, the low resolution implementation is incomplete and the suboptimality bounds are also weaker, which results in lower success rate and poor quality solutions.
Regarding planning times, MRA* is the fastest as it leverages the different resolution spaces intelligently to quickly find solutions.

For WA-MR, as it performs full state expansions the branching factor becomes very large in 3D i.e. 78, which deteriorates it's performance (see Table.~\ref{table:rlt3d}).
In terms of solution cost, MRA* generates solutions slightly worse than WA-MR and WA-High, yet still bounded by the same suboptimality bound.
\begin{figure*}[bth]
    \centering
    \begin{subfigure}{0.32\textwidth}
        \centering
        \includegraphics[width=.95\columnwidth]{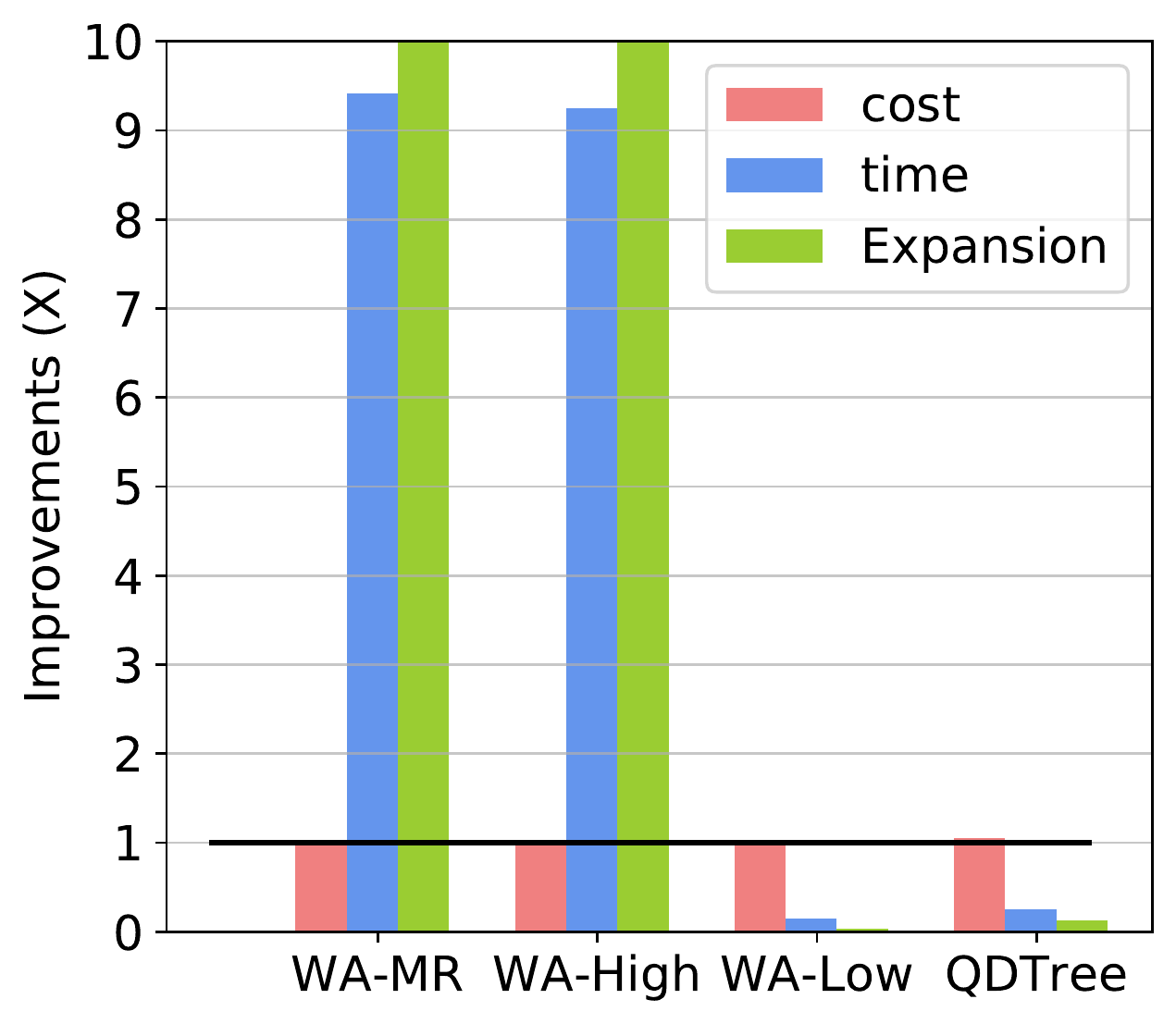}
        \centering
        \caption{The results of 2D planning.}\label{fig:rlt2d}
    \end{subfigure}
    \begin{subfigure}{0.32\textwidth}
        \centering
        \includegraphics[width=.95\columnwidth]{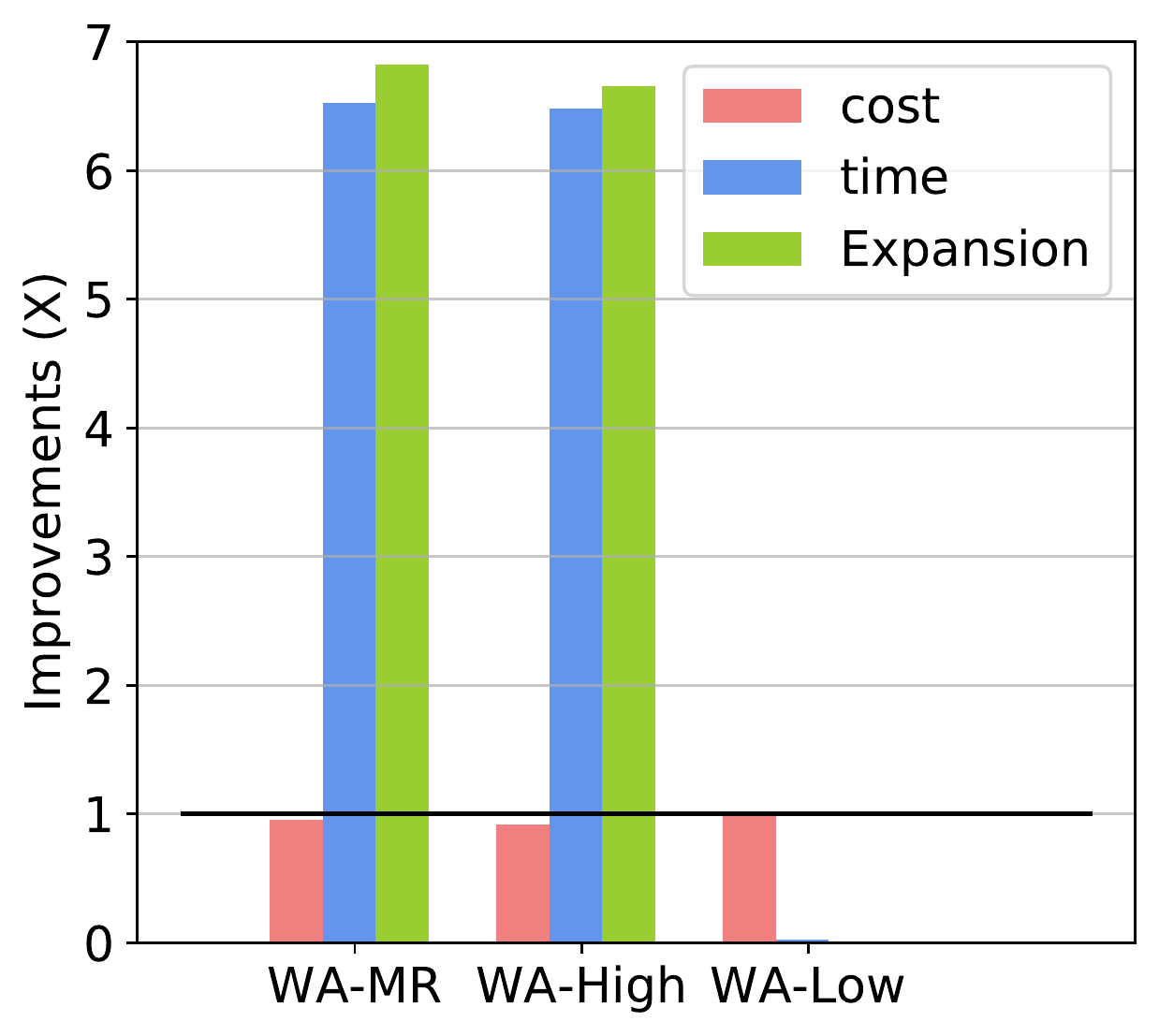}\caption{3D planning results in scene Fig.~\ref{fig:map_3d}}\label{fig:rlt3d}
        \end{subfigure}
    \begin{subfigure}{0.32\textwidth}
        \centering
        \includegraphics[width=.95\columnwidth]{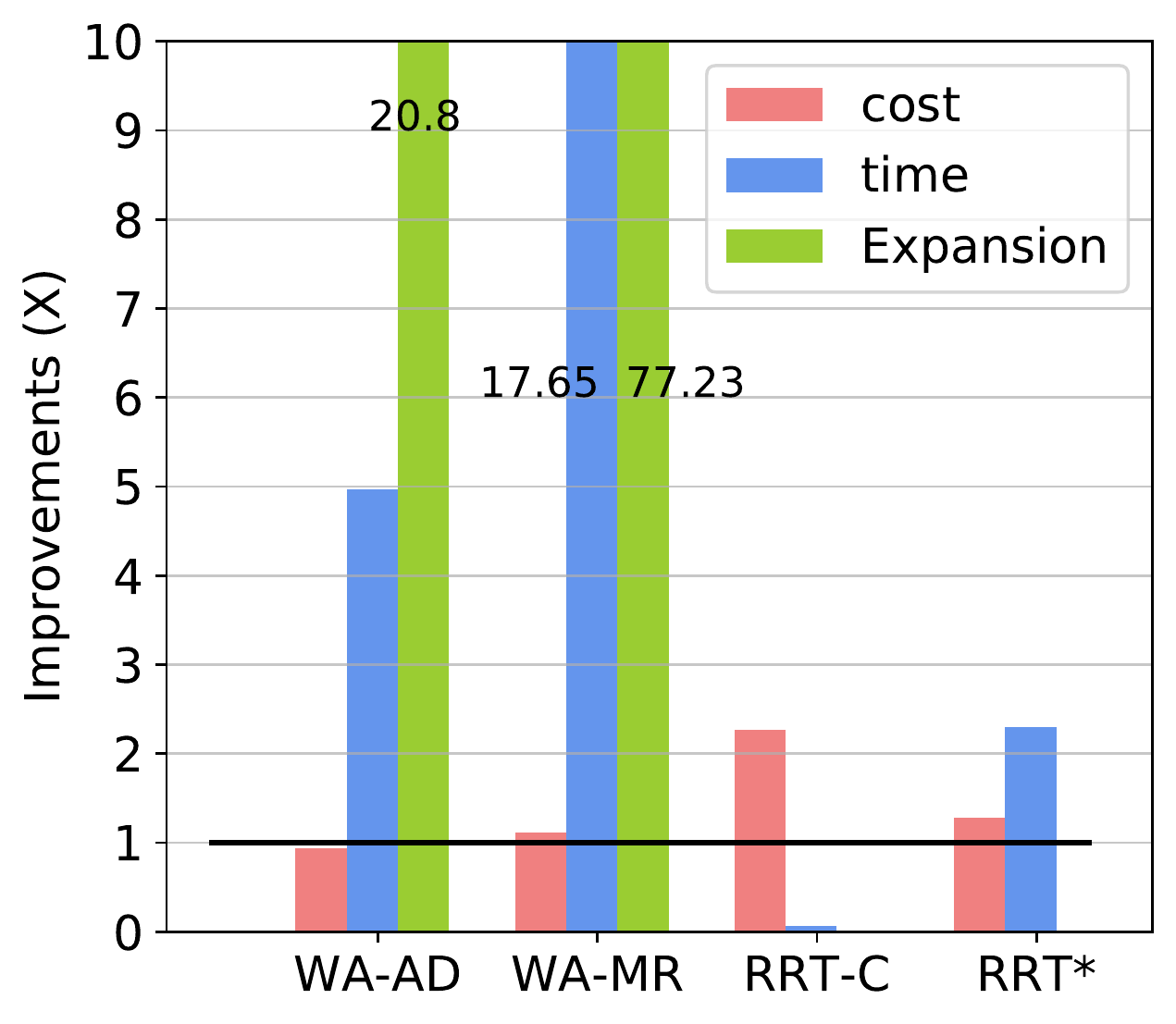}
        \caption{7D planning in scene Industrial.}\label{fig:rlt7d}
    \end{subfigure}
    \caption{Improvements of MRA* over baseline algorithms}\label{fig:all_results}
\end{figure*}

\subsection{7D Space Planning Results}
For 7D domain implementation we used an adaptation of {SMPL}\footnote{\url{https://github.com/aurone/smpl}}.
\subsubsection{Domain:}
We used PR2 robot's 7DoF arm for this domain.
We ran the experiments on four different benchmark scenarios~\cite{DBLP:journals/ijrr/CohenCL14} as in Fig.~\ref{fig:7dexample}.
The start and goal pairs were randomly generated
for~$70$ trails for each scene.
We used RRT-Connect (RRT-C) and RRT* as the sampling-based planning baselines.
In addition, we tested with WA-MR and WA* with adaptive dimensionality search~\cite{DBLP:conf/aips/GochevSL13} (WA-AD) as search-based planning baselines.
The implementations of sampling-based approaches are used from Open Motion Planning Library~(OMPL)~\cite{sucan2012the-open-motion-planning-library}. For RRT* we report the results for the first solution found.
For search-based algorithms, we set the weights for WA* search to be~$25$.
In our algorithm, we set the~$\omega_1$ and~$\omega_2$ value to~$20$ and~$25$ respectively.
\begin{figure*}[hbt]
    \centering
    \begin{subfigure}[b]{0.50\columnwidth}
        \centering
        \includegraphics[width=0.95\columnwidth]{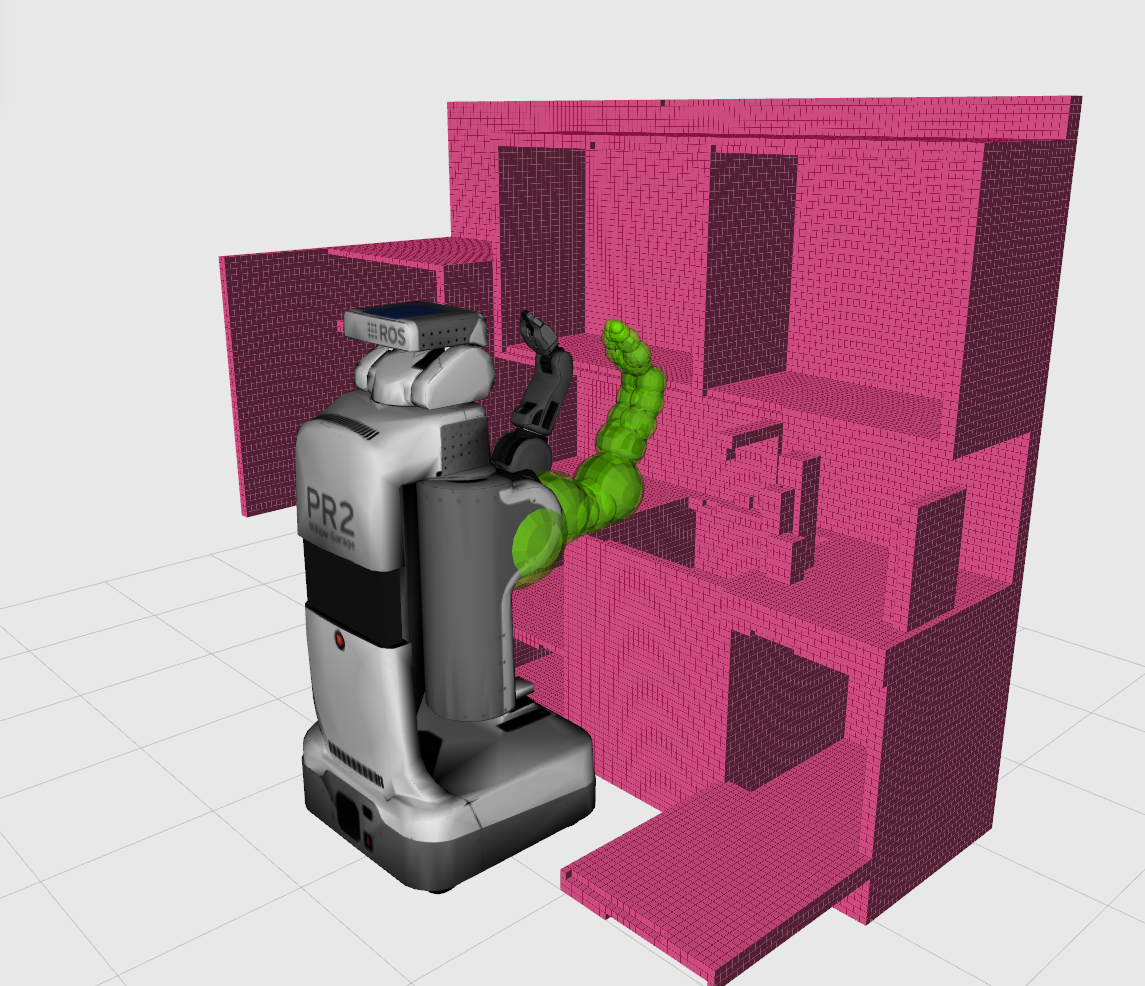}\caption{Kitchen}\label{fig:kitchen}
    \end{subfigure}
    \begin{subfigure}[b]{0.50\columnwidth}
        \centering
        \includegraphics[width=0.95\columnwidth]{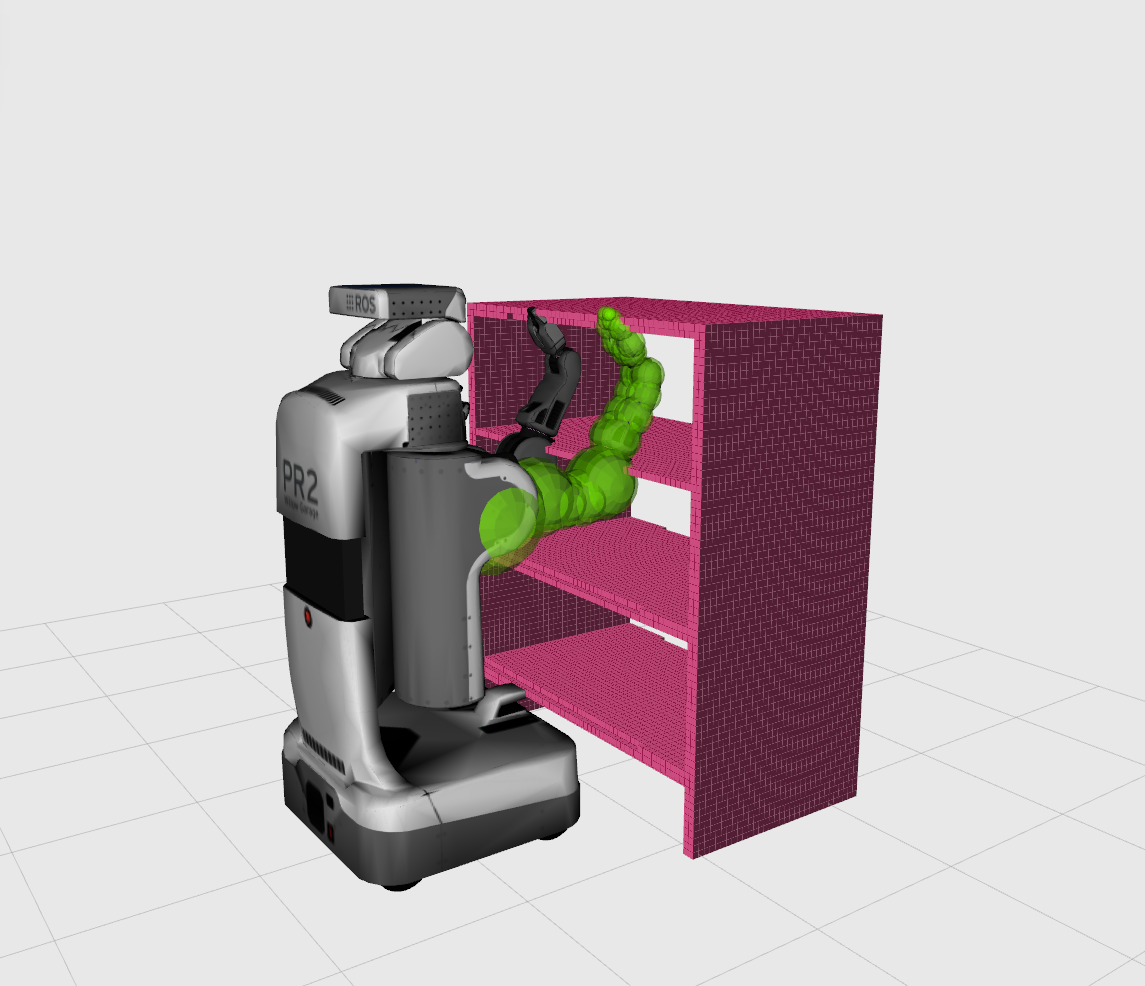}\caption{Bookshelf}\label{fig:bookshelf}
    \end{subfigure}
    \begin{subfigure}[b]{0.50\columnwidth}
        \centering
        \includegraphics[width=0.95\columnwidth]{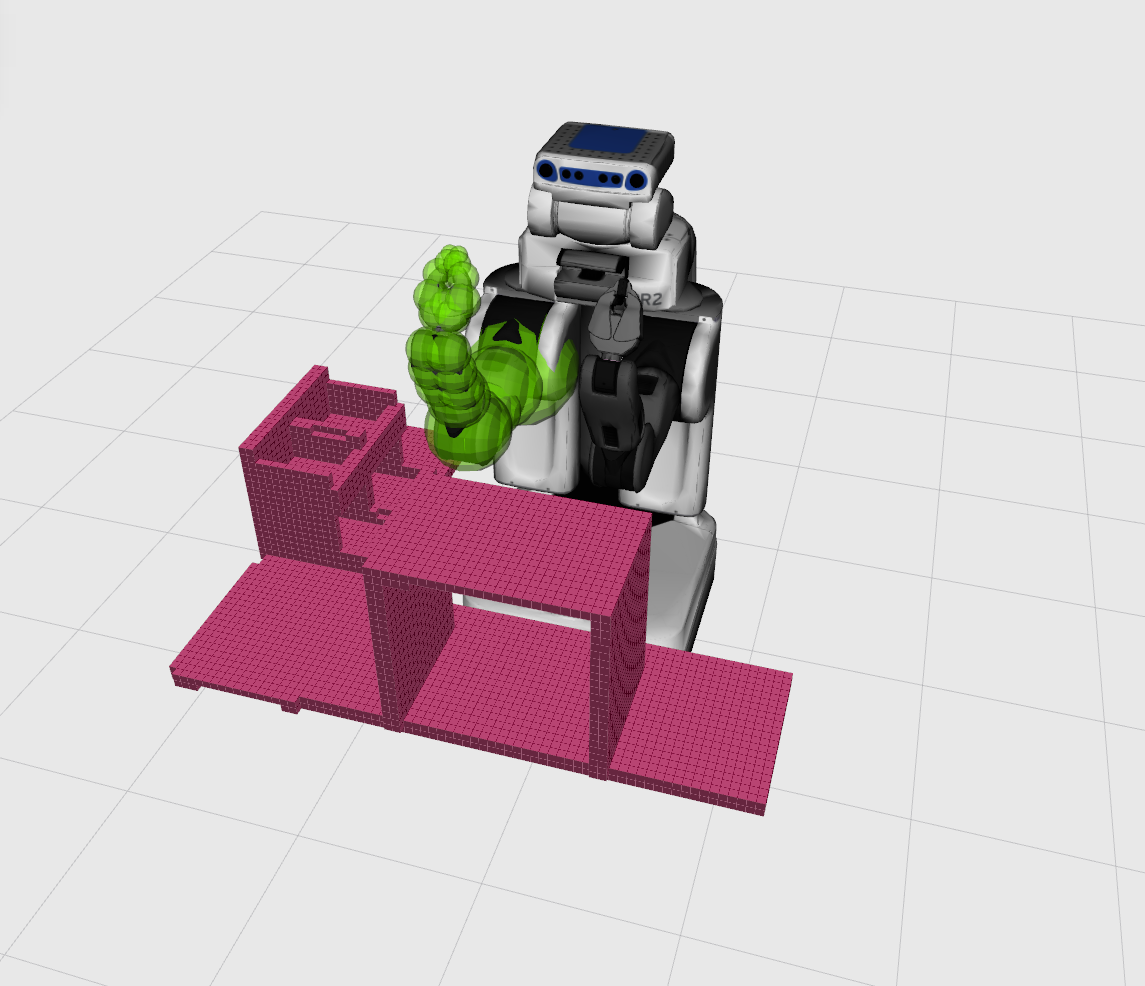}\caption{Industrial}\label{fig:industrial}
    \end{subfigure}
    \begin{subfigure}[b]{0.50\columnwidth}
        \centering
        \includegraphics[width=0.95\columnwidth]{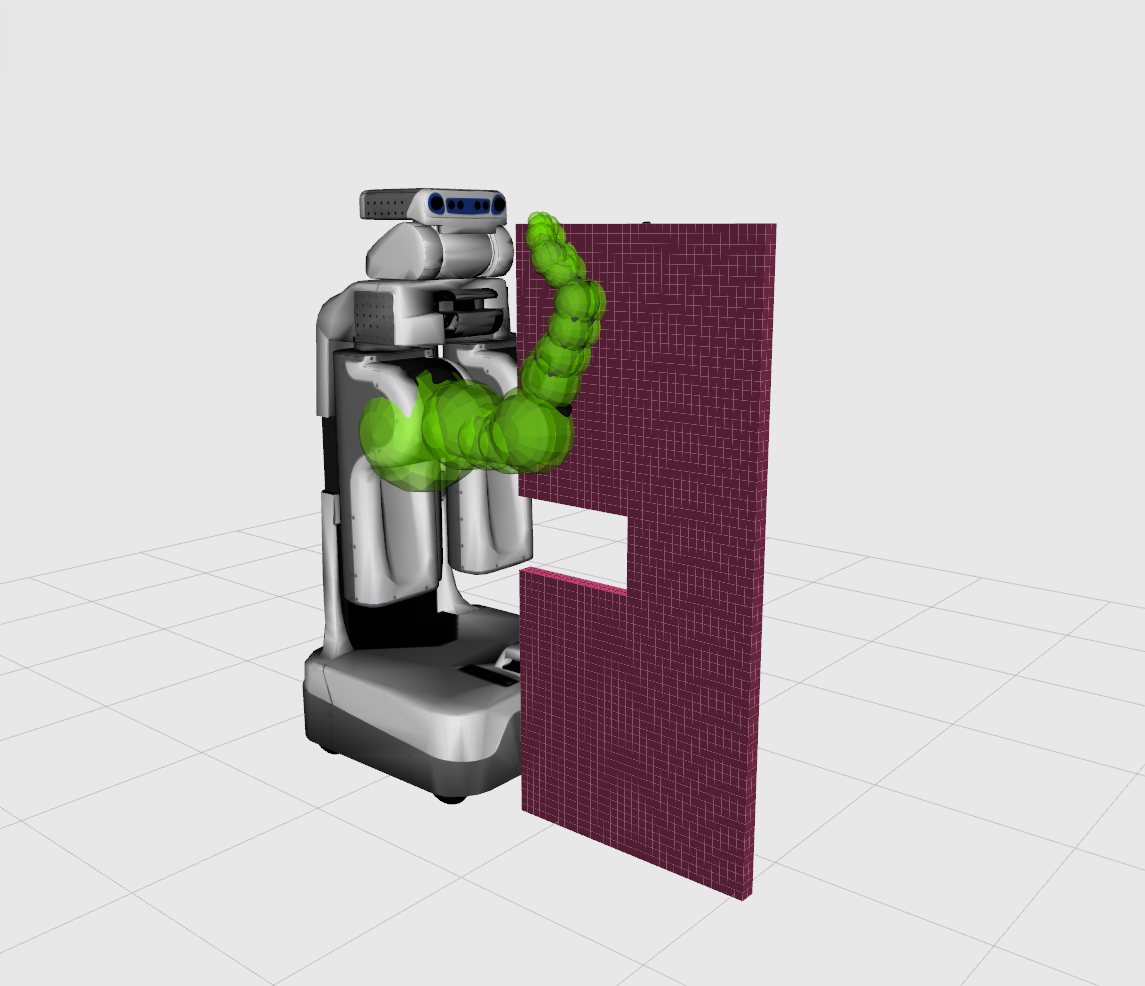}\caption{Narrow Passage}\label{fig:npassage}
    \end{subfigure}
    \caption{The planning scenes of single-arm manipulation problem.}
\label{fig:7dexample}
\end{figure*}
\begin{table*}[bth]
    \caption{7D planning results on 4 scenes.}\label{table:rlt7d}
    \begin{subtable}{\textwidth}
        \resizebox{\columnwidth}{!}{%
            \centering
             \begin{tabular}{c|c|c|c|c|c||c|c|c|c|c}
                    \hline \hline
                     & \multicolumn{5}{|c||}{Kitchen} & \multicolumn{5}{|c}{Bookshelf}\\
                    \hline
                    Algorithm  & MRA*   & WA-AD   & WA-MR   & RRT-C & RRT* &  MRA*   & WA-AD   & WA-MR   & RRT-C & RRT*\\
                    \hline 
                    Success Rate (\%)  & 95.24  & 49.21  & 46.031     & \textbf{95.83}          & 75.00 & \textbf{100}  & 57.38   & 47.54           & 89.79  & 44.00\\
                    \hline                           
                    Mean Time (s)      & 3.44            & 12.55  & 9.19   & \textbf{0.006} & 1.04 & 2.83  & 8.44    & 11.62  & \textbf{0.13} & 9.74\\
                    \hline                           
                    Mean Cost ($rad$)  & 7.57  & 6.22   & \textbf{5.96}   & 15.49    & 8.16 & 11.21     & 9.74    & \textbf{10.38}  & 28.54   & 15.70\\
                    \hline                           
                    Processed Mean Cost ($rad$)  & 6.96  & 5.22   & \textbf{5.26}   & 8.9   & 7.25 & 10.77  & \textbf{9.13}    & 9.15   & 16.93   & 13.59\\
                    \hline \hline
            \end{tabular}}
    \end{subtable}%
    \hfill
    \begin{subtable}{\textwidth}
        \centering
        \resizebox{\columnwidth}{!}{%
             \begin{tabular}{c|c|c|c|c|c||c|c|c|c|c}
                    \hline \hline
                     & \multicolumn{5}{|c||}{Industrial} & \multicolumn{5}{|c}{Narrow Passage}\\
                    \hline
                    Algorithm  & MRA*   & WA-AD   & WA-MR   & RRT-C & RRT* &  MRA*   & WA-AD   & WA-MR   & RRT-C & RRT*\\
                    \hline 
                    Success Rate (\%)  & \textbf{96.92}  & 72.31  & 15.38  & 89.83  & 62.07 & \textbf{100}    & 50.00   & 40.91 & 96.22  & 67.27\\
                    \hline                           
                    Mean Time (s)      & 3.13   & 7.61   & 15.48  & \textbf{0.29} & 9.84 & 4.30   & 7.98    & 15.21  & \textbf{0.05} & 4.70\\
                    \hline                           
                    Mean Cost ($rad$)  & 13.12 & 12.77  & \textbf{11.10}  & 29.26   & 16.38 & 11.92  & 10.71   & \textbf{10.12} & 20.90   & 14.39\\
                    \hline                           
                    Processed Mean Cost ($rad$)  & 12.67 & 11.20  & \textbf{10.53}  & 16.29   & 13.77 & 11.59  & 10.60   & \textbf{9.91}  & 12.42   & 12.20\\
                    \hline \hline
            \end{tabular}}
    \end{subtable}
\end{table*}
\subsubsection{Motion Primitives:}
A base set of~$14$ motion primitives are provided and categorized into classes with low, middle and high resolutions:~$M_{\rm low}$,~$M_{\rm middle}$,~$M_{\rm high}$.
Each motion primitive changes the position of one joint in both directions by an amount corresponding to the resolution.
In~$M_{\rm low}$,~$M_{\rm middle}$ and~$M_{\rm high}$ each action corresponds to a joint angle change of~$27^\circ$,~$9^\circ$ and~$3^\circ$ respectively.
In addition to the static motion primitives, adaptive actions are generated online via inverse kinematics computation~\cite{DBLP:conf/icra/CohenSCL11} to \textit{snap} end-effector to the goal pose when the expanded state is within a small threshold distance to the goal position.

\subsubsection{Results and Analysis:}
We show the experimental results for the \textit{Industrial} scene (Fig.~\ref{fig:industrial}) presented in Fig.~\ref{fig:rlt7d}. The statistics for the other scenes are very similar and are omitted.
In terms of planning times, MRA* outperforms all the baselines except RRT-Connect. MRA* shows over an order of magnitude improvements over WA-AD and WA-MR in planning times and number of expansions, indicating that the performance gains are higher in higher dimension domains.
With respect to solution cost, MRA* performs no worse than any other algorithm on common succeeded trials.

From the results documented in Table.~\ref{table:rlt7d},
MRA* has consistently high success rates across all the scenes.
Although MRA* is slower than RRT-Connect in terms of solution costs, MRA* (and other search-based baselines) consistently show better solution qualities then RRT-Connect and even RRT*.
While WA-MR performs worst in terms of planning time and success rate, it consistently provides the best quality solutions, which could be explained by the fact that WA-MR searches in the graph which is the union of all resolution spaces, and has stricter suboptimality bounds.

\section{Discussion}
In this section we discuss the choice of algorithm parameters and the selection of resolutions for MRA* searches. We analysed the effect of varying the parameters,~$\omega_1$ and~$\omega_2$, on the performance of MRA*.
We fixed~$\omega_1=3.0$ and varied~$\omega_2$ and vice versa linearly to analyse the effects of each parameter independently. The results for the 2D domain are shown in Fig.~\ref{fig:weight}.
Increasing~$\omega_2$ speeds up the search as it allows more expansions from inadmissible (courser resolution) searches.
Increasing~$\omega_1$, first speeds up the search because it makes the inadmissible searches more greedy. However, after~$\omega_1=2$, the search slows down as MRA* starts expanding more states from the anchor search. 
\begin{figure}[bt]
    \centering
    \includegraphics[width=0.95\columnwidth]{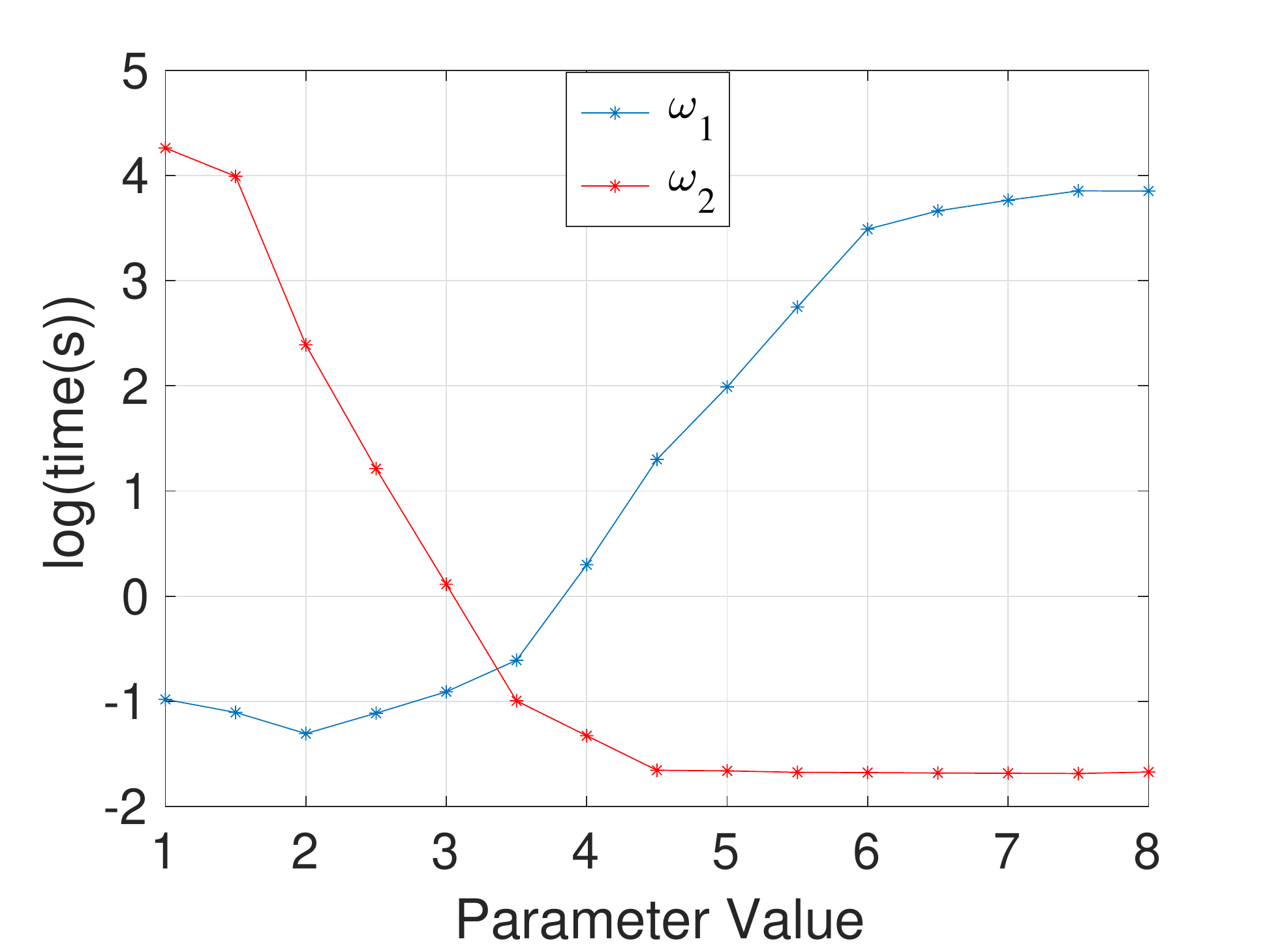}
    \caption{ Parameters~($\omega_1$/$\omega_2$) vs. Planning time($s$) in logarithm.
    Each parameter combination is tested with 3 different start and goal pairs. }\label{fig:weight}
\end{figure}

Besides the algorithm parameters, the choice of resolutions also significantly affects algorithm's performance. While the choice largely depends on the domain, resolutions should be selected such that the spaces are considerably overlapped so that more sharing is facilitated.
Our resolution selection criterion ensures that the centers of a lower resolution cells always coincide with the centers of higher resolution cells. As a consequence, the states in the lower resolution spaces will always be shared with the higher resolution spaces.
We do not claim that it is an optimal selection scheme and there definitely is more room for investigation.
\section{Conclusion and Future Work}
We presented a heuristic search-based algorithm that utilises multiple search spaces implicitly constructed with different resolutions and shares information between them.
We show that MRA* is resolution complete in the union resolution space and the solution cost returned by MRA* is bounded sub-optimal with respect to the optimal solution cost in the anchor resolution space.
We show that MRA* presents performance improvements over the baselines on large 2D, 3D domains and high-dimensional motion planning problems, most importantly in terms of success rates which are consistently high across all the domains and experiments.
While the results are promising, we believe that there is scope for further improvements.
Possible future directions can be 1) using multiple heuristics within the different resolutions searches to speed up the search
2) adding dynamic motions primitives for efficient sharing between the different spaces
3) using a large ensemble of resolution spaces and optimizing for the scheduling policy
and 4) using the multi-resolution framework for other bounded suboptimal search algorithms such as Optimistic Search~\cite{DBLP:conf/aips/ThayerR08} or search with different priority functions~\cite{DBLP:conf/ijcai/ChenS19}.

\section{Acknowledgements}
This work was in part supported by ONR grant N00014-18-1-2775.

\bibliographystyle{aaai}
\bibliography{reference}

\begin{thebibliography}{}

\bibitem[\protect\citeauthoryear{Aine \bgroup et al\mbox.\egroup
  }{2016}]{doi:10.1177/0278364915594029}
Aine, S.; Swaminathan, S.; Narayanan, V.; Hwang, V.; and Likhachev, M.
\newblock 2016.
\newblock {Multi-Heuristic A*}.
\newblock {\em The International Journal of Robotics Research~({IJRR})}
  35(1-3):224--243.

\bibitem[\protect\citeauthoryear{Bellman}{1957}]{Bellman:1957}
Bellman, R.
\newblock 1957.
\newblock {\em Dynamic Programming}.
\newblock Princeton University Press.

\bibitem[\protect\citeauthoryear{Brock and
  Kavraki}{2001}]{DBLP:conf/icra/BrockK01}
Brock, O., and Kavraki, L.~E.
\newblock 2001.
\newblock Decomposition-based motion planning: {A} framework for real-time
  motion planning in high-dimensional spaces.
\newblock In {\em {IEEE} International Conference on Robotics and
  Automation~({ICRA})},  1469--1474.

\bibitem[\protect\citeauthoryear{Cohen \bgroup et al\mbox.\egroup
  }{2011}]{DBLP:conf/icra/CohenSCL11}
Cohen, B.~J.; Subramania, G.; Chitta, S.; and Likhachev, M.
\newblock 2011.
\newblock Planning for manipulation with adaptive motion primitives.
\newblock In {\em {IEEE} International Conference on Robotics and
  Automation~({ICRA})},  5478--5485.

\bibitem[\protect\citeauthoryear{Cohen, Chitta, and
  Likhachev}{2010}]{DBLP:conf/icra/CohenCL10}
Cohen, B.~J.; Chitta, S.; and Likhachev, M.
\newblock 2010.
\newblock Search-based planning for manipulation with mootion primitives.
\newblock In {\em {IEEE} International Conference on Robotics and
  Automation~({ICRA})},  2902--2908.

\bibitem[\protect\citeauthoryear{Cohen, Chitta, and
  Likhachev}{2014}]{DBLP:journals/ijrr/CohenCL14}
Cohen, B.~J.; Chitta, S.; and Likhachev, M.
\newblock 2014.
\newblock Single- and dual-arm motion planning with heuristic search.
\newblock {\em The International Journal of Robotics Research~({IJRR})}
  33(2):305--320.

\bibitem[\protect\citeauthoryear{Elbanhawi and
  Simic}{2014}]{DBLP:journals/access/ElbanhawiS14}
Elbanhawi, M., and Simic, M.
\newblock 2014.
\newblock Sampling-based robot motion planning: {A} review.
\newblock {\em {IEEE} Access} 2:56--77.

\bibitem[\protect\citeauthoryear{Gammell, Srinivasa, and
  Barfoot}{2014}]{DBLP:conf/iros/GammellSB14}
Gammell, J.~D.; Srinivasa, S.~S.; and Barfoot, T.~D.
\newblock 2014.
\newblock Informed rrt*: Optimal sampling-based path planning focused via
  direct sampling of an admissible ellipsoidal heuristic.
\newblock In {\em {IEEE/RSJ} International Conference on Intelligent Robots and
  Systems~({IROS})},  2997--3004.

\bibitem[\protect\citeauthoryear{Garcia, Kapadia, and
  Badler}{2014}]{DBLP:conf/icra/GarciaKB14}
Garcia, F.~M.; Kapadia, M.; and Badler, N.~I.
\newblock 2014.
\newblock Gpu-based dynamic search on adaptive resolution grids.
\newblock In {\em {IEEE} International Conference on Robotics and
  Automation~({ICRA})},  1631--1638.

\bibitem[\protect\citeauthoryear{Gupta, Granmo, and
  Agrawala}{2011}]{DBLP:conf/icmla/GuptaGA11}
Gupta, N.; Granmo, O.; and Agrawala, A.~K.
\newblock 2011.
\newblock Thompson sampling for dynamic multi-armed bandits.
\newblock In {\em International Conference on Machine Learning and Applications
  and Workshops~({ICMLA})},  484--489.

\bibitem[\protect\citeauthoryear{Islam \bgroup et al\mbox.\egroup
  }{2012}]{islam2012rrt}
Islam, F.; Nasir, J.; Malik, U.; Ayaz, Y.; and Hasan, O.
\newblock 2012.
\newblock Rrt*-smart: Rapid convergence implementation of rrt* towards optimal
  solution.
\newblock In {\em IEEE International Conference on Mechatronics and
  Automation},  1651--1656.
\newblock IEEE.

\bibitem[\protect\citeauthoryear{Janson \bgroup et al\mbox.\egroup
  }{2015}]{DBLP:journals/ijrr/JansonSCP15}
Janson, L.; Schmerling, E.; Clark, A.~A.; and Pavone, M.
\newblock 2015.
\newblock Fast marching tree: {A} fast marching sampling-based method for
  optimal motion planning in many dimensions.
\newblock {\em The International Journal of Robotics Research~({IJRR})}
  34(7):883--921.

\bibitem[\protect\citeauthoryear{Jr. and
  LaValle}{2000}]{DBLP:conf/icra/KuffnerL00}
Jr., J. J.~K., and LaValle, S.~M.
\newblock 2000.
\newblock Rrt-connect: An efficient approach to single-query path planning.
\newblock In {\em {IEEE} International Conference on Robotics and
  Automation~({ICRA})},  995--1001.

\bibitem[\protect\citeauthoryear{Kalin~Gochev and
  Likhachev}{2013}]{DBLP:conf/aips/GochevSL13}
Kalin~Gochev, A.~S., and Likhachev, M.
\newblock 2013.
\newblock Incremental planning with adaptive dimensionality.
\newblock In {\em International Conference on Automated Planning and
  Scheduling~({ICAPS})}.

\bibitem[\protect\citeauthoryear{Karaman and
  Frazzoli}{2011}]{karaman2011sampling}
Karaman, S., and Frazzoli, E.
\newblock 2011.
\newblock Sampling-based algorithms for optimal motion planning.
\newblock {\em The International Journal of Robotics Research~({IJRR})}
  846--894.

\bibitem[\protect\citeauthoryear{LaValle}{2006}]{lavalle2006planning}
LaValle, S.~M.
\newblock 2006.
\newblock {\em Planning algorithms}.
\newblock Cambridge university press.

\bibitem[\protect\citeauthoryear{Li and
  Loew}{1987a}]{DBLP:journals/cacm/LiL87a}
Li, S., and Loew, M.~H.
\newblock 1987a.
\newblock Adjacency detection using quadcodes.
\newblock {\em Communications of the ACM} 30(7):627--631.

\bibitem[\protect\citeauthoryear{Li and Loew}{1987b}]{DBLP:journals/cacm/LiL87}
Li, S., and Loew, M.~H.
\newblock 1987b.
\newblock The quadcode and its arithmetic.
\newblock {\em Communications of the ACM} 30(7):621--626.

\bibitem[\protect\citeauthoryear{Likhachev and
  Ferguson}{2009}]{DBLP:journals/ijrr/LikhachevF09}
Likhachev, M., and Ferguson, D.
\newblock 2009.
\newblock Planning long dynamically feasible maneuvers for autonomous vehicles.
\newblock {\em The International Journal of Robotics Research~({IJRR})}
  28(8):933--945.

\bibitem[\protect\citeauthoryear{Moore and
  Atkeson}{1995}]{DBLP:journals/ml/MooreA95}
Moore, A.~W., and Atkeson, C.~G.
\newblock 1995.
\newblock The parti-game algorithm for variable resolution reinforcement
  learning in multidimensional state-spaces.
\newblock {\em Machine Learning} 21(3):199--233.

\bibitem[\protect\citeauthoryear{Pearl}{1984}]{pearl1984heuristics}
Pearl, J.
\newblock 1984.
\newblock {\em Heuristics: intelligent search strategies for computer problem
  solving}.
\newblock Addison-Wesley Pub. Co., Inc., Reading, MA.

\bibitem[\protect\citeauthoryear{Petrovic}{2018}]{DBLP:journals/corr/abs-1806-07457}
Petrovic, L.
\newblock 2018.
\newblock Motion planning in high-dimensional spaces.
\newblock {\em Computing Research Repository~({CoRR})} abs/1806.07457.

\bibitem[\protect\citeauthoryear{Phillips \bgroup et al\mbox.\egroup
  }{2015}]{DBLP:conf/ijcai/PhillipsNAL15}
Phillips, M.; Narayanan, V.; Aine, S.; and Likhachev, M.
\newblock 2015.
\newblock Efficient search with an ensemble of heuristics.
\newblock In {\em International Joint Conferences on Artificial
  Intelligence~({IJCAI})},  784--791.

\bibitem[\protect\citeauthoryear{Pivtoraiko and
  Kelly}{2005}]{DBLP:conf/iros/PivtoraikoK05}
Pivtoraiko, M., and Kelly, A.
\newblock 2005.
\newblock Generating near minimal spanning control sets for constrained motion
  planning in discrete state spaces.
\newblock In {\em {IEEE/RSJ} International Conference on Intelligent Robots and
  Systems~({IROS})},  3231--3237.

\bibitem[\protect\citeauthoryear{Pohl}{1973}]{pohl1973avoidance}
Pohl, I.
\newblock 1973.
\newblock The avoidance of~(relative) catastrophe, heuristic competence,
  genuine dynamic weighting and computational issues in heuristic problem
  solving.
\newblock In {\em International Joint Conferences on Artificial
  Intelligence~({IJCAI})},  12--17.

\bibitem[\protect\citeauthoryear{Sturtevant}{2012}]{sturtevant2012benchmarks}
Sturtevant, N.
\newblock 2012.
\newblock Benchmarks for grid-based pathfinding.
\newblock {\em Transactions on Computational Intelligence and AI in Games}
  4(2):144 -- 148.

\bibitem[\protect\citeauthoryear{{\c{S}}ucan, Moll, and
  Kavraki}{2012}]{sucan2012the-open-motion-planning-library}
{\c{S}}ucan, I.~A.; Moll, M.; and Kavraki, L.~E.
\newblock 2012.
\newblock The {O}pen {M}otion {P}lanning {L}ibrary.
\newblock {\em {IEEE} Robotics \& Automation Magazine} 19(4):72--82.
\newblock \url{http://ompl.kavrakilab.org}.

\bibitem[\protect\citeauthoryear{Vemula, M{\"{u}}lling, and
  Oh}{2016}]{DBLP:conf/socs/VemulaMO16}
Vemula, A.; M{\"{u}}lling, K.; and Oh, J.
\newblock 2016.
\newblock Path planning in dynamic environments with adaptive dimensionality.
\newblock In {\em Symposium on Combinatorial Search~(SoCS)},  107--116.

\bibitem[\protect\citeauthoryear{Yahja \bgroup et al\mbox.\egroup
  }{1998}]{DBLP:conf/icra/YahjaSSB98}
Yahja, A.; Stentz, A.; Singh, S.; and Brumitt, B.
\newblock 1998.
\newblock Framed-quadtree path planning for mobile robots operating in sparse
  environments.
\newblock In {\em {IEEE} International Conference on Robotics and
  Automation~({ICRA})},  650--655.

\end{thebibliography}

\end{document}